\documentclass[]{article} 

\usepackage{amsmath,amssymb}
\usepackage{times}
\usepackage{graphicx}
\usepackage{color}
\usepackage{bbm}
\usepackage{latexsym}
\usepackage{dsfont}
\usepackage{verbatim}
\usepackage{booktabs,array}

\definecolor{bl}{RGB}{30,30,150}
\usepackage{hyperref}
\hypersetup{colorlinks=true, citecolor=bl, linkcolor=bl, urlcolor=bl}

\usepackage{authblk}
\usepackage{amsthm}

\theoremstyle{plain}
\newtheorem{theorem}{Theorem}
\newtheorem{lemma}[theorem]{Lemma}
\newtheorem{corollary}[theorem]{Corollary}
\newtheorem{proposition}[theorem]{Proposition}
\theoremstyle{remark}

\theoremstyle{definition}
\newtheorem{example}[theorem]{Example}
\newtheorem{definition}[theorem]{Definition}

\newcommand{\sm}[1]{\begin{smallmatrix} #1 \end{smallmatrix}}

\newcommand{\Acal}{\mathcal{A}}

\newcommand{\Ecal}{\mathcal{E}}

\newcommand{\Mcal}{\mathcal{M}}

\newcommand{\Xcal}{\mathcal{X}}
\newcommand{\Ycal}{\mathcal{Y}}

\newcommand{\PP}{\mathbb{P}}
\newcommand{\R}{\mathbb{R}}

\newcommand{\conv}{\ensuremath{\operatorname{conv}}}
\newcommand{\rank}{\ensuremath{\operatorname{rank}}}
\newcommand{\supp}{\operatorname{supp}}
\newcommand{\argmax}{\operatorname{argmax}}

\newcommand{\TMcal}{\Mcal^{\text{\normalfont tropical}}}

\newcommand{\Amd}{\mathfrak{A}}
\newcommand{\Kmd}{\mathfrak{K}}

\newcommand{\keywords}[1]{\smallskip\par\noindent{\small{\em Keywords\/}: #1}}
\newcommand{\subclass}[1]{\smallskip\par\noindent{\small{\em MSC2010\/}: #1}}

\graphicspath{ {./Figures/}}

\begin{document}

\title{\Large\bf Dimension of Marginals of Kronecker Product Models\\\medskip
	\large Geometry of hidden-visible products of exponential families\bigskip}

\author[1]{{\normalsize\bf Guido Mont\'ufar}\thanks{montufar@mis.mpg.de}}
\author[2]{{\normalsize\bf Jason Morton}\thanks{morton@math.psu.edu}}

\affil[1]{\small Max Planck Institute for Mathematics in the Sciences, Inselstra\ss e~22, 04103 Leipzig, Germany}
\affil[2]{\small Department of Mathematics, Pennsylvania State University, PA 16802, University Park, USA}

\date{\small\today}

\maketitle

\begin{abstract}
\noindent	A Kronecker product model is the set of visible marginal probability distributions of an exponential family whose sufficient statistics matrix factorizes as a Kronecker product of two matrices, one for the visible variables and one for the hidden variables. 
	We estimate the dimension of these models by the maximum rank of the Jacobian in the limit of large parameters. 
	The limit is described by the tropical morphism; a piecewise linear map with pieces corresponding to slicings of the visible matrix by the normal fan of the hidden matrix. 
	We obtain combinatorial conditions under which the model has the expected dimension, 
	equal to the minimum of the number of natural parameters and the dimension of the ambient probability simplex. 
	Additionally, we prove that the binary restricted Boltzmann machine always has the expected dimension. 
	\keywords{expected dimension, tropical geometry, secant variety, restricted Boltzmann machine, inference function, Kronecker product, Hadamard product, Khatri-Rao product, exponential family harmonium}
	\subclass{%
		14T05, 
		52B05  
	}
\end{abstract}

\section{Introduction}

Simple probability distributions are often composed in order to obtain more interesting or complex probability distributions. 
Natural compositions include tensor products, convex combinations, and renormalized entrywise products.  Stochastic neural networks, for example, define compositions of elementary probability distributions on the states of individual neurons, which result in interesting joint probability distributions on the states of subsets of neurons of the entire network. 
We study probability models defined by building the Kronecker product of the sufficient statistics matrices of two exponential families and marginalizing over one of the two. 
We call these probability models hidden-visible Kronecker products of exponential families, or simply Kronecker product models. 
Examples include mixtures of exponential families, 
restricted Boltzmann machines~\cite{Freund1992}, 
and, more generally, hierarchical models on Cartesian products of simplicial complexes of hidden and visible variables. 
A related class of probability models is known in machine learning under the name exponential family harmonium~\cite{welling:exponential}.

We are interested in the dimension of these compositions, depending on the properties of the visible and hidden factors. 
Marginalization is in general a non-injective map which may collapse the dimension of the probability model that is being marginalized. 
Well known examples of this behavior are mixture models, which may have a dimension that is strictly smaller than the dimension of the corresponding non-marginalized models and the ambient probability simplex. 
Many interesting mixture models correspond to secant varieties and as such their dimension has been subject of numerous works in algebraic geometry. 
Examples include the secants of Segre products~\cite{AboOttavianiPeterson,Catalisano2011,raicu2012secant} and Veronese varieties~\cite{Abo,raicu2012secant}. Reference texts are~\cite{zaktangents,landsbergtensors}. 
Computing the dimension of secant varieties is a notoriously difficult problem, even in cases where the basis variety is of striking simplicity, such as the Segre products of one-dimensional projective spaces (corresponding to binary independence models in statistics or to the decomposable $2\times \cdots\times 2$ tensors in signal processing). 
In turn, computing the dimension of marginals of general exponential families seems near to hopeless. 
In certain cases, however, the dimension of secant varieties can be estimated by solving linear programs. 
The main idea has been illuminated in the tropical approach to secant dimensions by Draisma~\cite{Draisma}. 
Depending on the basis variety, the dimension estimates resulting from this approach give in fact the exact dimension. 
For Segre varieties, 
a sufficient condition is the existence of error correcting codes of a certain cardinality. 
Following these ideas, the dimension of binary and discrete restricted Boltzmann machines (Hadamard products of secant varieties of Segre products) have been studied in~\cite{Cueto2010} and~\cite{montufar2013discrete}, respectively. 
The goal of the present paper is to elucidate the approach from the perspective of exponential families, especially hierarchical models, and to provide explicit results for Kronecker product models (Definition \ref{def:kpm}) generalizing some of the above mentioned work on secants and restricted Boltzmann machine dimensions.

Consider two finite integers $N,M\in\mathbb{N}$ and a probability model $\{ p_\theta \colon \theta\in\R^M \}\subseteq\Delta_{N-1}$, parametrized by a 
function $\phi\colon \R^M\to\Delta_{N-1};\; \theta\mapsto p_\theta$ from $M$-dimensional Euclidean space $\R^M$ to the $(N-1)$-dimensional probability simplex 
$\Delta_{N-1}:=\{p\in\R^N \colon p(x)\geq 0\text{ for all } x \text{ and}$ $\sum_{x=1}^N p(x)=1 \}$. 
The dimension of $\phi(\R^M)$ is given by the maximum rank of the Jacobian $J_\phi$ of the map $\phi$ over the points of $\R^M$ where $\phi$ is smooth. 
Hence the problem reduces to  computing the maximum of the rank function over a parametric set of matrices. 
In general, computing the rank of these matrices is difficult. 
A possible approach  is to maximize the rank only over a subset of parameters where the Jacobian is simpler. 
When the result matches the maximum possible value, equal to the dimension $M$ of the parameter domain $\R^M$ or to the dimension $N-1$ of the codomain $\Delta_{N-1}$, 
we can be sure that we have attained the global maximum.

The rank becomes tractable when there is an obvious way to transform the Jacobian into a suitable block matrix by elementary matrix operations. 
Consider the case where $\phi$ parametrizes an exponential family, i.e., $\phi(\theta) = \exp (\langle \theta, F\rangle - \psi(\theta))$ for some matrix $F\in\R^{M\times N}$ of sufficient statistics, where $\psi$ is the normalizing (log-partition) function. 
Note that $\phi(\alpha\theta)\propto \phi^\alpha(\theta)$ for each $\alpha\in\R$. 
Hence, in the limit of large parameters ($\alpha\to\infty$), the resulting probability vector is proportional to the indicator of $\argmax\langle\theta,F\rangle$. 
The optimizing set is piecewise constant on~$\theta$ and usually has a small cardinality. 
Accordingly, if $\phi$ parametrizes marginals of an exponential family, then, in the limit of large $\alpha$, the matrix $J_\phi(\alpha\theta)$ has a block structure and its rank may be easy to determine. 
The mathematical formalism describing this limit of large parameters is known as tropical geometry. 
The tropicalization of $\phi$ produces a piecewise linear version of $\phi$ that captures its combinatoric properties. 
In this approach, the dimension of the models under consideration can be related to polyhedral optimization problems. 
This leads to combinatorial problems about optimal slicings of point configurations by polyhedral fans of point configurations. 
Characterizing the sets of maximizers of vectors in the row span of $F$ and the combinatorics of the convex support of the exponential family (the convex hull of the columns of $F$) is not an easy task in general. 
However, we do not need to fully solve that problem in order to estimate the maximum rank. 

As mentioned above, we will place emphasis on Kronecker product models with factors given by hierarchical models. Hierarchical models are ubiquitous in statistics applications~\cite{Wainwright:2008}. 
An algebraic perspective on hierarchical graphical models was given in~\cite{geiger2006}. 
Some works have discussed the convex support polytopes of hierarchical models, showing relations to linear codes and oriented matroids~\cite{Kahle2009,RKA10:Support_Sets_and_Or_Mat} and describing properties such as neighborliness and simpliciality~\cite{Kahle2010,montufar2013mixture}. 

\medskip

This paper is organized as follows. 
Section~\ref{section:jacobian} provides basic definitions of marginals of discrete exponential families and discusses the Jacobian of their natural parametrization. 
It also introduces the associated inference functions and tropical models. 
Section~\ref{section:model} defines the Kronecker product model and discusses its general properties. 
The sections that follow are devoted to specific types of Kronecker product models. 
Section~\ref{section:interactionmodels} gives a brief introduction to hierarchical models and defines related types of error correcting codes. 
Section~\ref{section:secants} studies the tropical dimension of mixtures of hierarchical models. 
Section~\ref{section:Hadamard} studies the tropical dimension of Hadamard products of mixtures of hierarchical models. 
Section~\ref{section:harmonium} studies the general case of Kronecker product hierarchical models. 
Theorem~\ref{theorem:generalKronecker} from Section~\ref{section:harmonium} includes the Theorems~\ref{theorem:mixtureskinteraction} and Theorem~\ref{theorem:indhidintvis} from Sections~\ref{section:secants} and~\ref{section:Hadamard} as special cases. 
Section~\ref{section:binRBM} proves that binary restricted Boltzmann machines always have the expected dimension, thereby solving the dimension question for cases that were left open in~\cite{Cueto2010}. 
Section~\ref{section:conclusion} discusses the results.

\section{Marginals of Exponential Families}
\label{section:jacobian}
In this section we present basic definitions of exponential families and their marginals. We discuss the Jacobian of the natural parametrization and relate its behavior in the limit of large parameters with the inference functions of the model. 
This leads us to the definition of the tropical morphism and a simplified rank estimation problem. 

Consider two finite sets $\Xcal$ and $\Ycal$. A probability distribution on $\Xcal\times\Ycal$ is a real-valued vector $p\in\R^{\Xcal\times\Ycal}$ with entries $p(x,y)\geq0$, $(x,y)\in\Xcal\times\Ycal$, satisfying $\sum_{x,y}p(x,y)=1$. 
Consider a function $F\colon \Xcal\times\Ycal\to \R^d$. 

\begin{definition}
	\label{def:exponentialfamily}
	The exponential family $\Ecal_F$ with sufficient statistics $F$ consists of all probability distributions on $\Xcal\times\Ycal$ of the form 
	\begin{equation*}
	p_\theta(x,y)=\frac{1}{Z(\theta)}\exp(\langle\theta, F(x,y)\rangle),\quad (x,y)\in\Xcal\times\Ycal, \quad \theta\in\R^d, 
	\end{equation*}
	where $\langle\cdot,\cdot\rangle$ denotes the standard inner product and $Z\colon \theta \mapsto \sum_{x',y'} \exp(\langle\theta, F(x',y')\rangle )$ is a normalization function. 
	Note that each distribution in the exponential family has strictly positive entries. 
	We will regard $F$ as a matrix with columns $F(x,y)\in \R^d$, $(x,y)\in \Xcal\times\Ycal$. 
	We note that the exponential family $\Ecal_F$ is fully characterized by the row space $\langle \R^d, 
	F\rangle \subseteq \R^{\Xcal\times\Ycal}$ of the sufficient statistics matrix $F$. 
	More precisely, two matrices $F$ and $G$ produce the same exponential family if and only if 
	$(F; \mathds{1})$ and $(G;\mathds{1})$ have the same row span, where $\mathds{1}$ is a row of ones. 
	From now on we will always assume, without loss of generality, that $F$ includes $\mathds{1}$ in its row span. 
	The dimension of the exponential family is then $\dim(\Ecal_F)=\rank(F) -1$. 
	This is the same as the dimension of the convex support polytope $\conv\{ F(x,y)\colon (x,y)\in\Xcal\times\Ycal \}$.  
\end{definition}

\begin{definition}
	\label{def:marginal}
	The marginal model $\Mcal_F$ on $\Xcal$ of the exponential family $\Ecal_F$ is the set of all probability distributions of the form  
	\begin{equation*}
	p_\theta(x) = \sum_{y\in\Ycal} \frac{1}{Z(\theta)}\exp(\langle \theta, F(x,y) \rangle ),\quad x\in\Xcal, \quad \theta\in\R^d. 
	\end{equation*} 
	The marginal model $\Mcal_F$ is the image of $\Ecal_F$ by the marginalization map, which is the linear map represented by the matrix with rows equal to the indicators  $\mathds{1}_{x}\in\R^{\Xcal\times\Ycal}$ of $\{x\}\times\Ycal$, for each $x\in\Xcal$. 
	We are interested in the dimension of $\Mcal_F$. 
	When $\dim(\Mcal_F)=\min\{\dim(\Ecal_F) , |\Xcal|-1 \}$, we say that $\Mcal_F$ has the expected dimension, meaning that marginalization does not collapse the dimension of $\Ecal_F$, or that the marginal $\Mcal_F$ is full dimensional, having the same dimension as the simplex $\Delta_{|\Xcal|-1}$ of all probability distributions on~$\Xcal$.  
\end{definition}

The dimension of $\Mcal_F$ is equal to the maximum rank of the Jacobian matrix of the parametrization $\theta\mapsto (p_\theta(x))_x$. 
The Jacobian is given by  
\begin{equation}
J_{\Mcal_F}(\theta) = \left( \sum_y p_\theta(x,y) F(x,y) - \sum_y p_\theta(x,y) \sum_{x',y'} p_\theta(x',y') F(x',y') \right)_x. 
\label{eq:derivative}
\end{equation}
The second term corresponds to the normalization function $Z$. 
For the rank we have 
\begin{align}
\rank \left(J_{\Mcal_F}(\theta)\right) 
= & \rank \left( \sum_y p_\theta(x,y) F(x,y) \right)_x  -1 \nonumber \\ 
= & \rank \left( \sum_y p_\theta(y|x) F(x,y) \right)_x -1.  \label{eq:rankJ} 
\end{align}
The first equality follows from the assumption that $F$ has a constant row. 
The second one is because $p_\theta(x)=\sum_{y'}p_\theta(x,y')> 0$ for all $x$. 
Here $p_\theta(y|x) : = p_\theta(x,y)/\sum_{y'}p_\theta(x,y')$ denotes the conditional probability of $y$ given $x$. 
A geometric interpretation is that $\rank(J_{\Mcal_F}(\theta))$ is the dimension of the polytope defined as the convex hull of 
\begin{equation*}
\sum_{y} p_\theta(y|x) F(x,y), \quad  x\in \Xcal. 
\end{equation*}

Evaluating Equation~\ref{eq:rankJ} is difficult, in general. 
The problem is easier in the limit of large parameters, 
where the sum over $y$ almost always reduces to a single term. 
To see that this is the case, note that multiplicative factors of the parameter $\theta$ correspond to exponential factors of the probability distribution, 
such that $p_{\alpha\theta}(\cdot|x)\propto p_\theta(\cdot|x)^\alpha$. 
Therefore, for any $\theta\in\R^d$, the limit $\lim_{\alpha\to\infty} p_{\alpha\theta}(y|x)$ is non-zero only for $y \in \argmax_y p_\theta(y|x) = \argmax_y \langle \theta, F(x,y)\rangle$. 
Following this line of thought, it is convenient to define the function that outputs the most likely value of $y$ to any given~$x$:  

\begin{definition}
	The inference function of $\Mcal_F$ with parameter $\theta\in\R^d$ is given by 
	\begin{equation*}
	h_\theta\colon \Xcal\to 2^\Ycal;\; x\mapsto  h_\theta(x) = \argmax_y \langle \theta, F(x,y)\rangle. 
	\end{equation*}
	Here $2^\Ycal$ denotes the power set of $\Ycal$. 
	Geometrically, $h_\theta(x)$ 
	is the set of $y\in\Ycal$ for which $F(x,y)$ lies in the supporting hyperplane of $F(x,y)$, $y\in\Ycal$, with normal~$\theta$. 
	The situation is illustrated in Figure~\ref{fig:inf}. 
\end{definition}

\begin{figure}[t]
	\centering
	\includegraphics{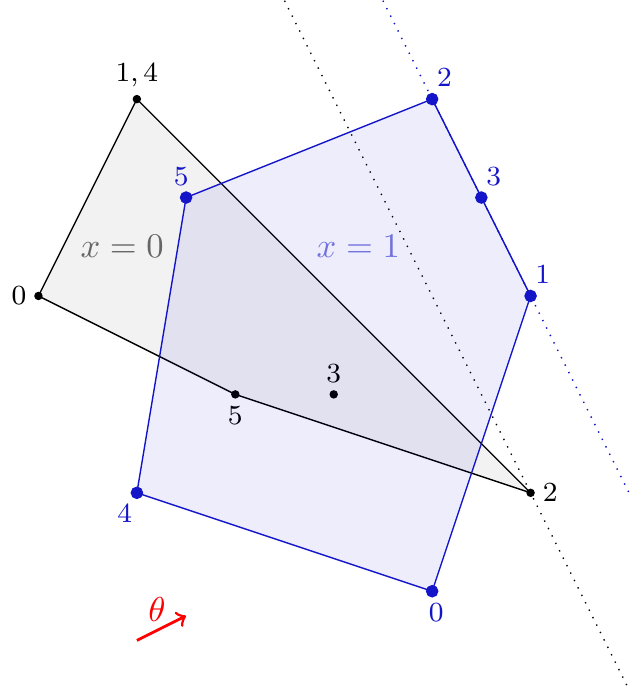}
	\caption{Illustration of an inference function for a model with $\Xcal=\{0,1\}$, $\Ycal=\{0,\ldots, 5\}$. 
		Each dot is a vector $F(x,y)$. Small dots correspond to $x=0$ and large ones to $x=1$. The value of $y$ is indicated next to each dot. 
		For the illustrated choice of $\theta$ the inference function is $h_\theta(x=0) = \{2\}$, $h_\theta(x=1) = \{1,2,3\}$ and the tropical morphism is given by $\Acal_\theta = (F(0,2), \frac13 (F(1,1) +F(1,2) +F(1,3) ))$. 
		For generic choices of $\theta$ the vector $F(x,h_\theta(x))$ is a vertex of the polytope $\conv \{F(x,y)\colon y\in \Ycal\}$, for all $x$. 
	}
	\label{fig:inf}
\end{figure}

We have the following dimension bounds: 
\begin{proposition}
	\label{proposition:rankJacobian}
	The dimension of the marginal model $\Mcal_F$ satisfies 
	\begin{equation*}
	\rank(F)-1 = \dim(\Ecal_F)\geq 
	\dim(\Mcal_F) 
\geq  \max_{\theta}\rank \left( \bar F(x,h_\theta(x)) \right)_x -1,
	\end{equation*} 
	where $\bar F(x,h_\theta(x)) := \frac{1}{|h_\theta(x)|}\sum_{y\in h_\theta(x)} F(x,y)$. 
\end{proposition}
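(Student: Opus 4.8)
The plan is to prove the two non-trivial inequalities separately; the equality $\rank(F)-1 = \dim(\Ecal_F)$ is already recorded in Definition~\ref{def:exponentialfamily}. For the upper bound $\dim(\Ecal_F)\ge\dim(\Mcal_F)$, I would simply invoke Definition~\ref{def:marginal}: the marginal model $\Mcal_F$ is the image of $\Ecal_F$ under the marginalization map, which is linear and hence smooth, and a smooth map cannot increase the dimension of its domain. This settles the left-hand inequality at once.

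For the lower bound I would use that $\dim(\Mcal_F)$ equals the maximal rank of the Jacobian over all parameters (as noted in the text), so it is enough to produce, for each fixed $\theta\in\R^d$, a parameter at which $\rank(J_{\Mcal_F})$ is at least $\rank(\bar F(x,h_\theta(x)))_x-1$. I would follow the ray $\alpha\theta$ as $\alpha\to\infty$. The first step is the pointwise limit of the conditionals: since $p_{\alpha\theta}(y|x)\propto\exp(\alpha\langle\theta,F(x,y)\rangle)$, factoring out the maximal exponent shows that $\lim_{\alpha\to\infty}p_{\alpha\theta}(y|x)=1/|h_\theta(x)|$ for $y\in h_\theta(x)$ and $0$ otherwise. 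Consequently the matrix $\big(\sum_y p_{\alpha\theta}(y|x)F(x,y)\big)_x$ converges entrywise to $(\bar F(x,h_\theta(x)))_x$ as $\alpha\to\infty$.

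The decisive step is then to apply the lower semicontinuity of rank: the locus of matrices of rank at least $r$ is open, being cut out by the non-vanishing of some $r\times r$ minor, so the rank of a limit matrix bounds from below the rank of every sufficiently nearby matrix. Hence $\rank\big(\sum_y p_{\alpha\theta}(y|x)F(x,y)\big)_x\ge\rank(\bar F(x,h_\theta(x)))_x$ for all large $\alpha$. Plugging this into the rank formula in Equation~\ref{eq:rankJ} evaluated at $\alpha\theta$ gives $\rank(J_{\Mcal_F}(\alpha\theta))\ge\rank(\bar F(x,h_\theta(x)))_x-1$ for large $\alpha$, and since $\dim(\Mcal_F)\ge\rank(J_{\Mcal_F}(\alpha\theta))$ while $\theta$ was arbitrary, the lower bound follows. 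Because each $h_\theta(x)$ is one of the finitely many subsets of the finite set $\Ycal$, the inference function $h_\theta$ takes only finitely many values, so the supremum over $\theta$ is genuinely attained and may be written as the maximum appearing in the statement.

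The main obstacle is keeping the direction of the rank inequality correct: rank is only lower semicontinuous, not continuous, so one must argue via the open locus of high-rank matrices, which ensures that the limit matrix supplies a lower (and not an upper) bound on the ranks along the ray. Conveniently, this is exactly the direction needed for a dimension lower bound. The only remaining point demanding care is the uniformity of the limit of the conditional probabilities and the resulting entrywise convergence of the averaged statistics matrix, which is a routine estimate.
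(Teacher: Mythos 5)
Your proof is correct and follows essentially the same route as the paper: both arguments start from the rank formula in Equation~\ref{eq:rankJ}, pass to the limit of the conditionals $p_{\alpha\theta}(y|x)$ along the ray $\alpha\theta$, and conclude via lower semicontinuity of the rank (your open-minor argument is exactly the standard justification the paper invokes implicitly). Your additional remarks --- that the left inequality follows from linearity of marginalization, and that the maximum over $\theta$ is attained because there are only finitely many inference functions --- are correct refinements of points the paper leaves tacit, not a different method.
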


\begin{proof}
	We have 
	\begin{align*}
	\max_{\theta\in\R^d} \rank \left(J_{\Mcal_F}(\theta)\right) 
	= & \max_{\theta\in\R^d} \rank \left(\sum_y p_\theta(y|x) F(x,y) \right)_x  -1\\
	\geq & \max_{\theta\in\R^d} \rank \left( \lim_{\alpha\to\infty} \sum_y p_{\alpha\theta}(y|x) F(x,y) \right)_x  -1\\
	=& \max_{\theta\in\R^d} \rank \left( \sum_{y\in h_\theta(x)} \frac{1}{|h_\theta(x)|} F(x,y) \right)_x  -1.  \end{align*}
	The first line is Equation~\ref{eq:rankJ}. 
	The second line follows from the continuity of the parametrization of the exponential family (see Definition~\ref{def:marginal}) and the lower semicontinuity of the rank function. 
	The third line is because $\lim_{\alpha\to\infty} p_{\alpha\theta} (y|x)$ is positive and constant on $h_\theta(x)=\argmax_y p_{\theta}(y|x)$ and zero on $\Ycal\setminus h_\theta(x)$. 
\end{proof}

Proposition~\ref{proposition:rankJacobian} shows that we can estimate the dimension of the marginal model by studying the maximum rank over $\theta$ of the piecewise constant matrix-valued function 
\begin{equation*}
\Acal_\theta: = (\bar F(x,h_{\theta}(x)))_x.
\end{equation*} 
For each $x\in\Xcal$, the column $\bar F(x, h_\theta(x))$ is the average of the maximizers of the linear form $\langle \theta, \cdot \rangle$ over $F(x,y)$, $y\in \Ycal$. 
For generic choices of $\theta\in\R^d$, the set of maximizers $F(x,y)$, $y\in h_\theta(x)$, consists of one single element, for each $x$. 
In particular, when all vectors $F(x,y)$ are different, for generic choices of $\theta$, the inference function $h_\theta$ maps each $x$ to a single~$y$.

The vectors $\bar F(x,h_\theta(x)), x\in \Xcal$, do not necessarily lie on the same supporting hyperplane of $F(x,y)$, $(x,y)\in\Xcal\times\Ycal$, although the converse is true in the following sense. 
If there is supporting hyperplane that intersects $F(x,y)$, $(x,y)\in\Xcal\times\Ycal$, exactly at $F(x,f(x))$, $x\in \Xcal$, for some $f\colon \Xcal\to\Ycal$, then $f$ is an inference function. 
However, since these vectors lie on a supporting hyperplane (which usually defines a proper face of the convex support polytope), they are not suited for estimating the maximum rank.

The matrix $\Acal_\theta$ defines a geometric object called the tropical version of the original marginal model: 
\begin{definition}
	\label{definition:tropicalmorphism}
	The tropical version of $\Mcal_F$, denoted $\TMcal_F$, is the set of all vectors in $\R^\Xcal/\R \mathds{1}$ (modulo addition of constant vectors) of the form 
	\begin{equation*}
	\Phi_\theta(x) = \langle\theta, \bar F(x, h_\theta(x)\rangle,\quad x\in\Xcal,  \quad \text{parametrized by }\theta\in\R^d. 
	\end{equation*} 
\end{definition}

Proposition~\ref{proposition:rankJacobian} can be regarded as a version of the Bieri-Groves theorem~\cite{Bieri1984,Draisma}, 
stating that the dimension of the marginal model is bounded below by the dimension of its tropical version: 
\begin{equation*}
\dim(\Mcal_F)\geq \dim(\TMcal_F). 
\end{equation*}
In particular, when $\TMcal_{F}$ has the expected dimension, then $\Mcal_F$ also has the expected dimension. 
This is the central idea of~\cite{Draisma} and subsequently~\cite{Cueto2010,montufar2013discrete} for estimating the dimension of secant varieties and restricted Boltzmann machines. 

We note that the marginal model and its tropical version are independent of the sufficient statistics matrix used to parametrize the underlying exponential family: 
\begin{proposition}
	\label{proposition:observations1}
	If $E = Q^\top F$ is a non-singular linear transformation of $F$, 
	then $\Ecal_E = \Ecal_F$, $\Mcal_E = \Mcal_F$, and 
	$\TMcal_{E} = \TMcal_F$. 
	More generally, if $G= R^\top F$ is a linear transformation of $F$, 
	then $\Ecal_G\subseteq\Ecal_F$, $\Mcal_G\subseteq\Mcal_F$, and $\TMcal_G\subseteq\TMcal_F$. 
\end{proposition}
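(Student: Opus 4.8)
The plan is to reduce all six claims to one exact algebraic identity. Write $G=R^\top F$, the non-singular statement being the special case where $R=Q$ is square and invertible. For every $(x,y)\in\Xcal\times\Ycal$ and every natural parameter $\eta$ one has
\begin{equation*}
\langle\eta,G(x,y)\rangle=\langle\eta,R^\top F(x,y)\rangle=\langle R\eta,F(x,y)\rangle.
\end{equation*}
The essential feature is that this is an equality of functions on $\Xcal\times\Ycal$ holding \emph{on the nose}, not merely up to an additive constant: it identifies the linear form indexed by the parameter $\eta$ of $G$ with the linear form indexed by the parameter $\theta=R\eta$ of $F$. I would feed this identity into each of the three constructions in turn.

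First I would treat the exponential families. Because the exponent of $p^G_\eta(x,y)$ agrees with that of $p^F_{R\eta}(x,y)$ for every $(x,y)$, the two unnormalized vectors coincide; dividing each by its (therefore common) normalizing sum gives $p^G_\eta=p^F_{R\eta}$. Thus every member of $\Ecal_G$ is a member of $\Ecal_F$, i.e.\ $\Ecal_G\subseteq\Ecal_F$. The marginal statement is then free: the marginalization map of Definition~\ref{def:marginal} is a fixed linear map not depending on the sufficient statistics, so applying it to $\Ecal_G\subseteq\Ecal_F$ yields $\Mcal_G\subseteq\Mcal_F$ at once.

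For the tropical versions I would first record the simplification $\Phi_\theta(x)=\langle\theta,\bar F(x,h_\theta(x))\rangle=\max_y\langle\theta,F(x,y)\rangle$, which holds because every $y\in h_\theta(x)$ attains the maximum, so averaging the maximizing values returns the maximum. Applying the core identity under the maximum gives $\Phi^G_\eta(x)=\max_y\langle\eta,G(x,y)\rangle=\max_y\langle R\eta,F(x,y)\rangle=\Phi^F_{R\eta}(x)$ for every $x$, hence $\Phi^G_\eta=\Phi^F_{R\eta}$ in $\R^\Xcal/\R\mathds{1}$ and $\TMcal_G\subseteq\TMcal_F$. (Equivalently, one checks $h^G_\eta=h^F_{R\eta}$ directly and averages the identity over this common argmax set.)

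Finally, to sharpen the inclusions into equalities when $R=Q$ is non-singular, I would use that $\eta\mapsto Q\eta$ is a bijection of $\R^d$: the maps $\eta\mapsto p^F_{Q\eta}$ and $\eta\mapsto\Phi^F_{Q\eta}$ then sweep out all of $\Ecal_F$ and $\TMcal_F$, giving $\Ecal_F\subseteq\Ecal_E$ and $\TMcal_F\subseteq\TMcal_E$, while $\Mcal_E=\Mcal_F$ is inherited directly from $\Ecal_E=\Ecal_F$ by marginalizing. Alternatively one simply notes $F=(Q^{-1})^\top E$ and reruns the argument with the roles of $E$ and $F$ exchanged. I do not anticipate a real obstacle: the entire content is that a linear change of the sufficient statistics is a linear change of the natural parameters. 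The only thing to keep in focus is the exactness of the displayed identity, since it is precisely what lets the argmax sets, inference functions, and tropical morphism transform without error terms, and what makes the ``$\mathds{1}$ in the row span'' normalization convention unnecessary for these inclusions.
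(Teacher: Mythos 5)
Your proof is correct and takes essentially the same route as the paper's: everything rests on the reparametrization identity $\langle\eta,R^\top F(x,y)\rangle=\langle R\eta,F(x,y)\rangle$, which the paper invokes as $\vartheta=Q^{-1}\theta$ for the tropical models and as equality (resp.\ inclusion) of row spans for the exponential families. Your write-up just makes the details explicit, including the marginal inclusions, which the paper leaves implicit.
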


\begin{proof}
	The equality of the exponential families follows from the equality of the row spaces of $F$ and $E$. 
	For the tropical models note that for each $\theta$ there is a $\vartheta = Q^{-1} \theta$ with $\langle \theta, F(x,h_\theta(x))\rangle = \langle \vartheta, E(x, h_{\vartheta}(x))\rangle$. 
	For the inclusions note that the row space of $G= R^\top F$ is a linear subspace of the row space of $F$. 
\end{proof}

\section{The Kronecker Product Model}
\label{section:model}

Kronecker product models are marginals of exponential families whose sufficient statistics matrix factorizes as the Kronecker product of a sufficient statistics matrix over the visible states and one over the hidden states. 
Recall the definition of the Kronecker product $(A_{i,j})_{i,j}\otimes (B_{k,l})_{k,l}:=(A_{i,j} B_{k,l})_{(i,k),(j,l)}$. 

\begin{definition}\label{def:kpm}
	A {\em Kronecker product model} is a marginal model $\Mcal_F$, where $F$ factorizes as $F(x,y)=A(x)\otimes B(y)$, $x\in\Xcal$, $y\in\Ycal$, for some $A\in\R^{a\times \Xcal}$ and $B\in\R^{b\times \Ycal}$. 
\end{definition}

We will use the notations $A=(A_x)_x\in\R^{a\times\Xcal}$, $B=(B_y)_y\in\R^{b\times\Ycal}$, 
and assume that the row space of each matrix contains a constant non-zero vector. 
For simplicity in the following we assume that all columns of $B$ are different. 
The general case with repeated columns is very similar, but needs more complicated notations.

Consider a generic choice of the parameter $\theta$, 
such that the inference function $h_\theta$ maps each $x\in\Xcal$ to a single $y\in\Ycal$. 
The tropical morphism 
$\Phi_\theta(x) = \langle \theta,\Acal_\theta \rangle$ of a Kronecker product model has the form 
\begin{equation}
\Acal_\theta = ( F(x,h_\theta(x)))_x = (A_x\otimes B_{h_\theta(x)})_x = A\odot B_{h_\theta}, 
\label{eq:tropicalKronecker}
\end{equation}
where $(A_{i,j})_{i,j}\odot (B_{k,l})_{k,l}:=(A_{i,j} B_{k,j})_{(i,k),j}$ denotes the column-wise Kronecker product or Khatri-Rao product~\cite{KhatriRao1968}. 
Alternatively, after rearranging columns, we can write this as
\begin{equation*}
\Acal_\theta = (B_y \otimes A_{h^{-1}_\theta(y)})_y = B\odot A_{h^{-1}_\theta}, 
\end{equation*} 
where now $\odot$ denotes the column-block-wise Kronecker product. 
Here the column-blocks, indexed by $y$, are $B_y$ and $A_{h^{-1}_\theta(y)}$. 
When $h^{-1}_\theta(y)=\emptyset$, we simply omit the block $B_y\otimes A_{h^{-1}_\theta(y)}$.

Unfortunately there is no formula for expressing the rank of a Khatri-Rao product in terms of the ranks of its factors. 
A simple lower bound for matrices consisting of non-zero columns is $\rank(A\odot B_h) \geq \max \{ \rank(A), \rank(B_h)\}$. 
More elaborate lower bounds can be given in terms of Kruskal ranks~\cite{890366}, also for column-block partitioned matrices~\cite{doi:10.1137/060661685}. 
Our analysis seeks to characterize pairs of matrices $A$ and $B$ for which the upper bound 
$\rank(A\odot B_h)\leq \rank(A\otimes B_h)=\rank (A)\cdot\rank (B_h)$ is attained for some inference function $h$. 
For this it is critical to study the possible inference functions.

The factorization property $F=A\otimes B$ leads to highly structured inference functions. 
We explain this in the following. 
For a given parameter vector $\theta=(\theta_{(i,j)})_{(i,j)}\in\R^{a b }$ let $\Theta=(\theta_{(i,j)})_{j,i}\in\R^{b \times a}$ denote the matrix with column-by-column vectorization equal to $\theta$. 
By Roth's lemma~\cite{Roth1934} we have the following equalities: 
\begin{gather*}
\langle\theta, (A\otimes B)_{(x,y)}\rangle =\langle \Theta A_x, B_y\rangle= \langle \Theta^\top  B_y, A_x \rangle\quad\text{for all } x\in\Xcal, y\in\Ycal . 
\label{eq:rotheqs}
\end{gather*} 
In turn, the same inner product describes the following distributions over $(x,y)$, $y$, and $x$: 
\begin{align}
p_\theta(\cdot,\cdot) =& \frac{1}{Z(\theta)} \exp(\langle\theta, A\otimes B \rangle) \;\in\;\Ecal_{A\otimes B},\nonumber\\
p_\theta(\cdot|x)  =& \frac{1}{Z(\Theta A_x)} \exp(\langle \Theta A_x, B \rangle) \;\in\;\Ecal_{B}, \label{equation:conditionaldistributions}\\ 
p_\theta(\cdot|y)  =& \frac{1}{Z(\Theta^\top B_y)} \exp(\langle \Theta^\top  B_y, A \rangle) \;\in\;\Ecal_A.\nonumber 
\end{align}
In particular, $p_\theta(\cdot|x)$ appears in the Equation~\ref{eq:rankJ} of the Jacobian rank. 
Geometrically, $\Theta A$ is the linear projection of the columns of $A$ by the matrix $\Theta$ to the parameter space of the hidden exponential family $\Ecal_B$. 
Similarly, $\Theta^\top  B$ is the projection of the columns of $B$ by the matrix $\Theta^\top$ to the parameter space of the visible exponential family  $\Ecal_A$. 

The inference function of a Kronecker model satisfies 
\begin{align*}
h_\theta(x)
=&  \operatorname{argmax}_y  p_\theta(y|x) 
= \operatorname{argmax}_y  \langle \theta, A_x \otimes B_y\rangle   \\
=&  \operatorname{argmax}_y  \langle \Theta A_x, B_y \rangle 
=  \{y\in\Ycal \colon \Theta A_x \in N_{B}(y)\} \\
=&  \operatorname{argmax}_y  \langle A_x, \Theta^\top B_y \rangle 
=  \{y\in\Ycal \colon A_x \in N_{\Theta^\top B}(y)\}. 
\end{align*}
Here the normal cones of $B$ at $B_y$ and of $\Theta^\top B$ at $\Theta^\top B_y$ are defined, respectively, as
\begin{align*}
N_B(y) := & \{r\in\R^b \colon \langle r, B_y - B_{y'}\rangle \geq 0 \text{ for all } y'\in\Ycal\setminus \{y\}\}, \\
N_{\Theta^\top B}(y) :=&  \{r\in\R^a \colon \langle r, \Theta^\top B_y - \Theta^\top B_{y'}\rangle \geq 0 \text{ for all } y'\in\Ycal\setminus \{y\}\}.  
\end{align*} 

In turn, the inference function $h_\theta$ can be interpreted as a slicing of $\Theta A$ by the normal fan of $B$, 
or, equivalently, a slicing of $A$ by the normal fan of $\Theta^\top B$: 
\begin{definition}
	\label{definition:slicing}
	A $B$-slicing of $A$ is a partition of the column set of $A$ into the blocks 
	$A_{C_y}:= (A_x)_{x\in C_y}$ with $C_y:=h_\theta^{-1}(y) = \{x\in\Xcal \colon h_\theta(x)=y\}$, for all $y\in\Ycal$. 
	Here we assume that $h_\theta$ maps each $x$ to a single $y$, which is the generic case when all columns of $B$ are different. 
	Note that some of the sets $C_y = h^{-1}_\theta(y)$ may be empty. 
\end{definition}

Related to the statements of Proposition~\ref{proposition:observations1} we have the following observations:  
If $B' = D B$ for some matrix $D$, then any $B'$-slicing is also a $B$-slicing. 
Furthermore, if $B'$ is a matrix consisting of all columns of $B$ that lie in a common supporting hyperplane of $B$, then any $B'$-slicing is a $B$-slicing. 
Note also that, by the mixed-product property of the Kronecker product, 
$(C\otimes D)(A\otimes B)$ $=(CA)\otimes (DB)$, 
the Kronecker product of two linearly transformed matrices $CA$ and $DB$ is given by a linear transformation of the Kronecker product of the two original matrices.

\section{Hierarchical Models} 
\label{section:interactionmodels}

We are interested in Kronecker product models for which the factor exponential families $\Ecal_A$ and $\Ecal_B$ are hierarchical models. 
In this section we provide the necessary definitions. 

Consider $n$ random variables with finite state sets $\Xcal_i=\{0,1\ldots,|\Xcal_i|-1\}$, $i\in[n]:=\{1,\ldots, n\}$. 
We write $x=(x_1,\ldots, x_n)$ for an element of $\Xcal=\Xcal_1\times\cdots\times\Xcal_n$. 
Given some $x\in\Xcal$ and $\lambda\subseteq[n]$, we write $x_\lambda =(x_i)_{i\in \lambda}$ for the natural projection of $x$ to the $\lambda$-coordinates. 	
Unless otherwise stated, in all that follows $\Lambda$ will denote an inclusion closed set of subsets of $[n]$, 
such that $\lambda\in \Lambda$ and $\lambda'\subseteq\lambda$ imply $\lambda'\in \Lambda$. 
Consider the linear subspace of $\R^\Xcal$ consisting of all linear combinations of real-valued functions that depend only on $x_\lambda$, $\lambda\in\Lambda$, 
\begin{equation*}
V_\Lambda(\Xcal):=\Big\{ \sum_{\lambda\in\Lambda}f_\lambda\colon f_\lambda(x)= f_\lambda(x_\lambda) \Big\}\subseteq\R^\Xcal.
\end{equation*}

\begin{definition}
	The hierarchical model on $\Xcal$ with interactions $\Lambda$ is the exponential family 
	$\Ecal_{A}$, where $A\in\R^{a\times\Xcal}$ is a matrix with row span $V_\Lambda(\Xcal)$. 
	We will denote this model by $\Ecal_\Lambda$. 
	An important special case is the $k$-interaction model $\Ecal_{\Lambda_k}$, 
	defined by some $1\leq k\leq n$ and $\Lambda_k=\{\lambda\subseteq [n]\colon |\lambda|\leq k\}$. 
\end{definition}

An important example is the independence model $\Ecal_{\Lambda_1}$, which is the $k$-interaction model with $k=1$.  
The independence model consists of probability distributions that factorize as $p(x_1,\ldots, x_n) = p_1(x_1)\otimes \cdots\otimes p_n(x_n)$, 
where, for each $i\in[n]$, $p_i$ is a probability distribution over $x_i$. 
This model corresponds to the Segre embedding of $\PP^{|\Xcal_1|-1}\times\cdots\times \PP^{|\Xcal_n|-1}$ into $\PP^{\prod_{i\in[n]} |\Xcal_i| -1 }$.

The sufficient statistics matrix of a hierarchical model can be constructed in the following simple way. 
The first row, with index $\emptyset$, is the constant vector of ones $A_{\emptyset,x}:=1$, $x\in\Xcal$. 
The other rows are indexed by pairs $(\lambda, \tilde x_\lambda)$, with $\lambda\in\Lambda$ and $\tilde x_\lambda\in \tilde\Xcal_\lambda:=\times_{i\in\lambda}(\Xcal_i\setminus\{0\})$. 
The $((\lambda, \tilde x_\lambda),x)$-th entry of the matrix is 
\begin{equation}
A_{(\lambda,\tilde x_\lambda),x} := 
\begin{cases}
1,&\text{if }x_\lambda=\tilde x_\lambda\\
0,&\text{otherwise}  
\end{cases} .
\label{eq:suffstatinteraction}
\end{equation}

\begin{proposition}
	\label{lemma:row-space}
	The matrix $A$ from Equation~\ref{eq:suffstatinteraction} has row space $V_\Lambda(\Xcal)$. 
	Furthermore, $\dim(\Ecal_\Lambda)= \rank(A)-1 = \dim(V_\Lambda(\Xcal))-1 =\sum_{\lambda\in\Lambda\setminus\emptyset} \prod_{i\in\lambda}(|\Xcal_i|-1)$. 
\end{proposition}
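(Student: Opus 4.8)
The plan is to establish the two nontrivial claims in the proposition: that the matrix $A$ from Equation~\ref{eq:suffstatinteraction} has row space exactly $V_\Lambda(\Xcal)$, and that the dimension of this space equals $1 + \sum_{\lambda\in\Lambda\setminus\emptyset}\prod_{i\in\lambda}(|\Xcal_i|-1)$. The equalities $\dim(\Ecal_\Lambda)=\rank(A)-1=\dim(V_\Lambda(\Xcal))-1$ are then immediate: the first is the definition of the dimension of an exponential family recorded in Definition~\ref{def:exponentialfamily}, and the second holds because $\rank(A)$ is by definition the dimension of its row space.

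First I would show that each row of $A$ lies in $V_\Lambda(\Xcal)$, which gives the inclusion $\operatorname{rowspace}(A)\subseteq V_\Lambda(\Xcal)$. This is essentially by construction: the row indexed by $(\lambda,\tilde x_\lambda)$ is the indicator function $x\mapsto \mathds{1}[x_\lambda=\tilde x_\lambda]$, which depends only on $x_\lambda$, hence is one of the summands $f_\lambda$ allowed in the definition of $V_\Lambda(\Xcal)$; and the constant row depends on no coordinates, so it lies in the $\lambda=\emptyset$ block. For the reverse inclusion $V_\Lambda(\Xcal)\subseteq\operatorname{rowspace}(A)$, I would argue that any function $f_\lambda$ depending only on $x_\lambda$ is a linear combination of the indicators $\mathds{1}[x_\lambda=\tilde x_\lambda]$ as $\tilde x_\lambda$ ranges over \emph{all} of $\times_{i\in\lambda}\Xcal_i$ (not just the reduced index set $\tilde\Xcal_\lambda$). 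The subtlety is that the rows of $A$ only use $\tilde x_\lambda\in\tilde\Xcal_\lambda=\times_{i\in\lambda}(\Xcal_i\setminus\{0\})$, so I must show that the indicators with some coordinate equal to $0$ are recovered from the constant row together with lower-order indicators — which follows from inclusion-closedness of $\Lambda$ and the inclusion–exclusion identity expressing $\mathds{1}[x_i=0]$ in terms of $\mathds{1}[x_i=\tilde x_i]$ for $\tilde x_i\neq 0$ and the constant.

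For the dimension count I would prove that the rows of $A$ are linearly independent, so that $\rank(A)$ equals the number of rows, namely $1+\sum_{\lambda\in\Lambda\setminus\emptyset}\prod_{i\in\lambda}(|\Xcal_i|-1)$, since each block $\lambda$ contributes $|\tilde\Xcal_\lambda|=\prod_{i\in\lambda}(|\Xcal_i|-1)$ rows. The cleanest route is to exhibit a basis of $V_\Lambda(\Xcal)$ of exactly this cardinality and then invoke the row-space equality already proved, rather than directly verifying independence of the possibly awkward indicator rows. A standard choice is the orthogonal characters (e.g.\ the tensor-product basis built from nonconstant functions on each $\Xcal_i$), indexed by the same pairs $(\lambda,\tilde x_\lambda)$; these are manifestly linearly independent across distinct $\lambda$ because functions genuinely depending on coordinate set $\lambda$ cannot cancel against functions supported on a different coordinate set without inclusion-closedness forcing them into a common lower block.

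I expect the main obstacle to be the reverse inclusion, specifically handling the reduced index set $\tilde\Xcal_\lambda$ that deliberately omits the $0$-state in each coordinate. One must verify that this ``deletion of one state per coordinate'' produces exactly the right count $\prod_{i\in\lambda}(|\Xcal_i|-1)$ while still spanning all of $V_\Lambda(\Xcal)$, and that the interaction with inclusion-closedness (the $0$-indicators being absorbed into strictly smaller $\lambda'\subsetneq\lambda$, down to $\emptyset$) works uniformly. This is the classical fact that the hierarchical model has the stated dimension, and the bookkeeping — showing simultaneously spanning and the absence of overcounting — is where the real care is needed; the rest is formal.
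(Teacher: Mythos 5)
Your proposal is correct, and its main content --- the reverse inclusion $V_\Lambda(\Xcal)\subseteq\operatorname{rowspace}(A)$ --- follows the same route as the paper: expand $f_\lambda$ into cylinder indicators $\mathds{1}[x_\lambda=x^\ast_\lambda]$ over \emph{all} of $\Xcal_\lambda$, then use inclusion--exclusion together with inclusion-closedness of $\Lambda$ to express the indicators whose center has zero coordinates through the rows $(\lambda',\tilde x_{\lambda'})$ with $\lambda'\subseteq\lambda$; the paper writes in one display exactly the formula you get by multiplying out your per-coordinate identity $\mathds{1}[x_i=0]=1-\sum_{\tilde x_i\neq 0}\mathds{1}[x_i=\tilde x_i]$. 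Where you genuinely diverge is the dimension count. The paper simply asserts that the rows of $A$ are linearly independent and stops; you instead exhibit an orthogonal tensor-product basis of $V_\Lambda(\Xcal)$ (for each coordinate, the constant plus $|\Xcal_i|-1$ functions orthogonal to constants), count it, and conclude $\rank(A)=\dim V_\Lambda(\Xcal)=1+\sum_{\lambda\in\Lambda\setminus\emptyset}\prod_{i\in\lambda}(|\Xcal_i|-1)$ from the row-space equality already proved. This buys you a self-contained proof of the independence claim the paper leaves unargued, and as a byproduct it shows the indicator rows of $A$ are themselves independent, since their number equals the rank. One wording correction: inclusion-closedness of $\Lambda$ is not what makes your basis elements linearly independent --- that is pure orthogonality (a factor orthogonal to the constant kills any pairing across distinct $\lambda$, and orthogonality within a coordinate handles equal $\lambda$); inclusion-closedness is needed only for the spanning half, so that every $\lambda'\subseteq\lambda\in\Lambda$ indexing a term in the expansion of $f_\lambda$ is again in $\Lambda$.
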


The statement of Proposition~\ref{lemma:row-space} is well known in the context of hierarchical models. 
One way of proving it is as follows: 

\begin{proof}
	Let $\lambda\in\Lambda$ and let $f_\lambda\in\R^\Xcal$ be a function with $f_\lambda(x)=f_\lambda(x_\lambda)$ for all $x\in\Xcal$. 
	Note that $f_\lambda$ can be written as a linear combination of indicator functions of cylinder sets as 
	$f_\lambda(x) = \sum_{x^\ast_\lambda\in\Xcal_\lambda} f_\lambda(x^\ast_\lambda) \mathds{1}_{\{ x'\in\Xcal\colon x'_\lambda = x^\ast_\lambda \}}(x)$ for all $x\in\Xcal$.  
	Hence we only need to show that the row span of $A$ contains the indicator functions  $\mathds{1}_{\{ x' \in\Xcal \colon x'_\lambda = x_\lambda^\ast\}}$ for all $x_\lambda^\ast\in\Xcal_\lambda$. 
	For any $x^\ast_\lambda$ and $\lambda^\ast = \supp(x^\ast)\cap\lambda$ we have that 
	\begin{equation*} 
	\mathds{1}_{\{ x'\in\Xcal \colon x'_\lambda = x_\lambda^\ast \}}(x) = 
	\sum_{\lambda^\ast \subseteq \lambda' \subseteq\lambda }  
	(-1)^{|\lambda' \setminus \lambda^\ast|} 
	\sum_{\substack{\tilde x_{\lambda'}\in\tilde\Xcal_{\lambda'} \colon \\ \tilde x_{\lambda^\ast} =x_{\lambda^\ast}^\ast} } A_{(\lambda',\tilde x_{\lambda'}),x},\quad\text{for all $x\in\Xcal$}.
	\end{equation*} 
	Hence the row span of $A$ contains the indicator function $\mathds{1}_{\{ x'\in\Xcal \colon x'_\lambda = x_\lambda^\ast \}}$. 
	Since $x^\ast_\lambda$ was arbitrary, this shows that the row span contains $f_\lambda$. 
	Since $\lambda$ was arbitrary in $\Lambda$, this shows that the row span contains $V_\Lambda(\Xcal)$. 
	The reverse inclusion is direct. 
	The matrix $A$ has $1 + \sum_{\lambda\in\Lambda\setminus\emptyset} \prod_{i\in\lambda}(|\Xcal_i|-1)$ linearly independent rows, including a row of ones. This implies the dimension statement. 
\end{proof}

We now introduce the concept of $\Lambda$-balls, 
which we will use for constructing slicings of hierarchical models and formulating our theorems later on. 

\begin{definition}
	Let $\Xcal=\Xcal_1\times\cdots\times\Xcal_n$. 
	The $\Lambda$-ball in $\Xcal$ centered at $x\in\Xcal$ is the set of vectors that differ from $x$ exactly at the entries from some $\lambda\in\Lambda$, 
	\begin{equation*}
	K_\Xcal(x,\Lambda) : = \{ x'\in\Xcal\colon \{i\in[n] \colon x'_i\neq x_i \} =\lambda\in\Lambda \}. 
	\end{equation*}
	Note that all $\Lambda$-balls in $\Xcal$ have the same cardinality, regardless of their center, 
	\begin{equation*}
	K_\Xcal(\Lambda):= |K_\Xcal(x,\Lambda)| = 1 + \sum_{\lambda\in\Lambda\setminus \emptyset} \prod_{i\in\lambda}(|\Xcal_i|-1). 
	\end{equation*}
	We will drop the subscript $\Xcal$ when it is clear from the context. 
\end{definition}
An important special case of $\Lambda$-balls are Hamming balls. 
Recall that the Hamming distance between two vectors $x,x'\in\Xcal$ is defined as $d_H(x,x'):=|\{ i\in[n]\colon x_i\neq x'_i \}|$. 
The radius-$k$ Hamming ball in $\Xcal$ centered at $x$ is the set of vectors that differ from $x$ at most in $k$ entries, 
$K(x,\Lambda_k)=\{ x'\in\Xcal\colon d_H(x,x')\leq k \}$. 

In later sections we will consider slicings of a matrix $A$ with row span $V_\Lambda(\Xcal)$ into blocks corresponding to $\Lambda$-balls. 
We will use the following lemma: 
\begin{lemma}
	\label{lemma:fullranksubmatrix}
	Let $\Xcal=\Xcal_1\times\cdots\times\Xcal_n$, let $A$ be a matrix with row span $V_\Lambda(\Xcal)$, let $x\in\Xcal$, and let $K(x,\Lambda)$ be the $\Lambda$-ball in $\Xcal$ centered at $x$. 
	Then $A_{K(x,\Lambda)}$ has full rank, 
	\begin{equation*}
	\rank (A_{K(x,\Lambda)}) = \dim (V_\Lambda(\Xcal)) = 1+ \sum_{\lambda\in\Lambda\setminus\emptyset} \prod_{i\in\lambda}(|\Xcal_i|-1)
	=K(\Lambda).
	\end{equation*} 
\end{lemma}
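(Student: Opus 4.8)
The plan is to reduce the claim to a single explicit square matrix and show that, after a suitable ordering of its rows and columns, it is triangular with unit diagonal. First I would observe that $\rank(A_{K(x,\Lambda)})$ depends only on the row span $V_\Lambda(\Xcal)$ and on the column set $K(x,\Lambda)$, and not on the particular choice of $A$: it equals the dimension of the image of $V_\Lambda(\Xcal)$ under restriction to the coordinates in $K(x,\Lambda)$. Hence I may take $A$ to be the specific sufficient statistics matrix of Equation~\eqref{eq:suffstatinteraction}, whose rows are indexed by pairs $(\lambda,\tilde x_\lambda)$ with $\lambda\in\Lambda$ and $\tilde x_\lambda\in\tilde\Xcal_\lambda$, and which by Proposition~\ref{lemma:row-space} has exactly $K(\Lambda)$ linearly independent rows. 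Since $\rank(A)=K(\Lambda)=|K(x,\Lambda)|$, the upper bound $\rank(A_{K(x,\Lambda)})\leq K(\Lambda)$ is automatic, and it remains only to prove the reverse inequality.

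Next I would normalize the center. Because $V_\Lambda(\Xcal)$ is invariant under relabeling the states within each coordinate, and such relabelings carry $K(x,\Lambda)$ to $K(x',\Lambda)$, I may assume $x=0:=(0,\ldots,0)$, so that $K(0,\Lambda)=\{x'\in\Xcal\colon \supp(x')\in\Lambda\}$. There is then a natural bijection between the rows of the chosen $A$ and the columns of $A_{K(0,\Lambda)}$: the row index $(\lambda,\tilde x_\lambda)$ corresponds to the unique ball element $x'$ with $\supp(x')=\lambda$ and $x'_\lambda=\tilde x_\lambda$ (with $\lambda=\es$ corresponding to the center $0$ and the constant row). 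Under this identification $A_{K(0,\Lambda)}$ is a square matrix $M$ of size $K(\Lambda)$, and from Equation~\eqref{eq:suffstatinteraction} its entry indexed by $\big((\lambda,\tilde x_\lambda),(\mu,\tilde y_\mu)\big)$ equals $1$ precisely when $\lambda\subseteq\mu$ and $(\tilde y_\mu)_\lambda=\tilde x_\lambda$, and equals $0$ otherwise. The key point is that a nonzero entry forces $\lambda\subseteq\mu$, since each coordinate in $\lambda$ carries a nonzero value of $\tilde x_\lambda$ and hence must lie in $\supp(x')=\mu$; and every diagonal entry, where $\lambda=\mu$ and $\tilde x_\lambda=\tilde y_\mu$, equals $1$.

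Finally I would order the indices by increasing support size $|\lambda|$, grouping them into blocks of equal size. A nonzero entry requires $\lambda\subseteq\mu$, hence $|\lambda|\leq|\mu|$, so all nonzero entries lie in the diagonal block or in blocks toward larger column support; and within a single block of fixed support size, $\lambda\subseteq\mu$ together with $|\lambda|=|\mu|$ forces $\lambda=\mu$, after which $(\tilde y_\mu)_\lambda=\tilde x_\lambda$ forces $\tilde x_\lambda=\tilde y_\mu$, i.e. the entry is diagonal. Thus $M$ is upper triangular with $1$'s on the diagonal, so $\det M=1$ and $\rank(A_{K(0,\Lambda)})=K(\Lambda)$, which gives the claim. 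The only genuine obstacle is verifying this triangularity cleanly, and in particular the claim that within a fixed support size the sole surviving entries are diagonal; this is precisely where the inclusion-closedness of $\Lambda$ and the combinatorial structure of the $\Lambda$-ball make the indices match up, while the remaining steps are bookkeeping.
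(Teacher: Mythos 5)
Your proof is correct, but it follows a genuinely different route from the paper's. The paper also normalizes the center to $0$ and works with the matrix of Equation~\ref{eq:suffstatinteraction}, but its key observation is global: every row indexed by some $\lambda\notin\Lambda$ vanishes identically on the columns of $K(0,\Lambda)$ (this is where inclusion-closedness of $\Lambda$ enters directly), so one can apply Proposition~\ref{lemma:row-space} with $\Lambda=2^{[n]}$ — the complete matrix has row span $\R^\Xcal$, hence its restriction to the ball's columns spans $\R^{K(0,\Lambda)}$, and since the non-$\Lambda$ rows contribute nothing there, the $\Lambda$-rows alone must span $\R^{K(0,\Lambda)}$. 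Your argument instead stays local to the ball: you exploit the bijection between row indices $(\lambda,\tilde x_\lambda)$ and ball elements to view $A_{K(0,\Lambda)}$ as a square matrix, and show it is block upper triangular under the support-size order with identity diagonal blocks, hence of determinant~$1$. The paper's proof is shorter because it recycles the spanning result it already proved; yours is more self-contained (it never passes through $\R^\Xcal$), yields the slightly stronger fact that the square submatrix is unimodular, and makes explicit which row certifies the independence of which column. One small imprecision in your closing remark: the triangularity itself never uses inclusion-closedness of $\Lambda$; that hypothesis is needed only in your first reduction step, via Proposition~\ref{lemma:row-space}, to guarantee that the matrix of Equation~\ref{eq:suffstatinteraction} really has row span $V_\Lambda(\Xcal)$.
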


\begin{proof}
	Consider the matrix entries $A_{(\lambda, \tilde x_\lambda), x}$ defined in Equation~\ref{eq:suffstatinteraction}. 
	For any $\lambda\not\in\Lambda$ and $\tilde x_\lambda\in \tilde{\Xcal}_\lambda = \times_{i\in\lambda} \Xcal_i\setminus\{0\}$, 
	we have that $A_{(\lambda, \tilde x_\lambda), x} =0$ for all $x\in K(0,\Lambda)$. 
	On the other hand, by Proposition~\ref{proposition:base} the full matrix $(A_{(\lambda,\tilde x_\lambda), x})_{(\lambda\in2^{[n]}, \tilde x_\lambda\in\tilde{\Xcal}_\lambda), x\in\Xcal}$ has rank equal to $\dim(V_{2^{[n]}}(\Xcal)) =|\Xcal|$ and thus it has row span $\R^\Xcal$. 
	This implies that the matrix 
	$(A_{(\lambda,\tilde x_\lambda), x})_{(\lambda\in\Lambda, \tilde x_\lambda\in\tilde{\Xcal}_\lambda), x\in K(0,\Lambda)}$ has row span $\R^{K(0,\Lambda)}$. 
	This proves the claim for the case of a $\Lambda$-ball centered at the zero vector. 
	The other cases follow from this after relabeling the states. 
\end{proof}

Lemma~\ref{lemma:fullranksubmatrix} states that certain collections of columns of the sufficient statistics matrix of a hierarchical model are linearly independent. 
This property is related to the notion of Kruskal rank, defined as the largest $r$ for which any $r$ columns of a matrix are linearly independent, which has been used before to study the rank of Khatri-Rao products~\cite{doi:10.1137/060661685}.

Kronecker products of hierarchical models correspond to hierarchical models with interaction sets given by the Cartesian product of the interaction sets of the factors. 
More precisely, if $A$ has row span $V_\Lambda(\Xcal_1\times\cdots\times\Xcal_n)$ and $B$ has row span $V_{\Lambda'}(\Ycal_1\times\cdots\times\Ycal_m)$, 
then $A\otimes B$ has row span $V_{\Lambda\times\Lambda'}(\Xcal\times\Ycal)$, 
where $\Lambda\times\Lambda'=\{ \lambda\times\lambda' \subseteq[n]\times[m] \colon \lambda\in \Lambda, \lambda'\in\Lambda' \}$. 
Figure~\ref{fig0} shows various types of examples. 
In the top row of the figure, the visible factor $\Ecal_A$ is an independence model. 
In the bottom row $\Ecal_A$ is an interaction model. 
The first column shows examples where the hidden factor $\Ecal_B$ is the set of all strictly positive distributions on $\Ycal$. 
These correspond to mixture models of $\Ecal_A$. We will cover them in Section~\ref{section:secants}. 
The second column shows examples where $\Ecal_B$ is an independence model. 
These correspond to Hadamard products of mixture models of $\Ecal_A$. We will cover them in Section~\ref{section:Hadamard}. 
The third column shows examples where $\Ecal_B$ is an interaction model. We will cover them in Section~\ref{section:harmonium}. 

\begin{figure}[t]
	\centering
	\scalebox{.85}{
		\begin{tabular}{ccc}
			\includegraphics[]{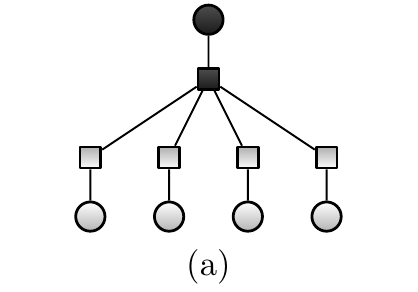} & \includegraphics[]{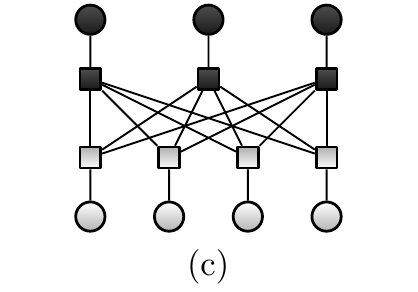} & \includegraphics[]{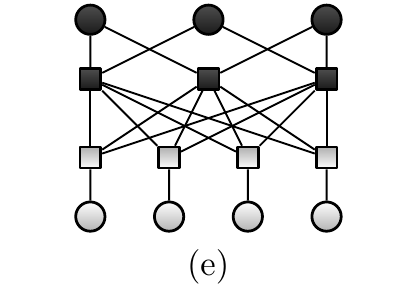}\\
			\includegraphics[]{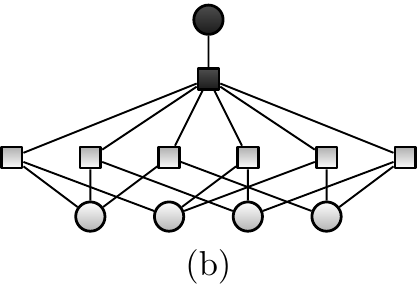} & \includegraphics[]{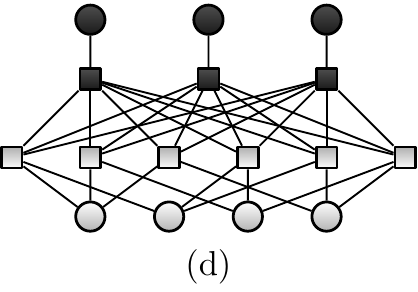} & \includegraphics[]{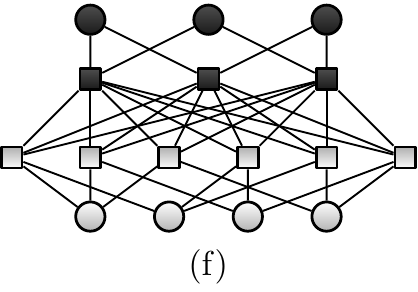}
		\end{tabular}
	}
	\caption{
		Examples of Kronecker product hierarchical models. 
		The dark and light circles represent hidden and visible variables, respectively. 
		The dark and light squares represent interactions among the adjacent hidden and visible variables, respectively. 
		The edges between squares represent interactions between the hidden and visible variables adjacent to those squares. 
		The graph between squares is full bipartite. 
		(a)~mixture model of an independence model, 
		(b)~mixture model of a pairwise interaction model, 
		(c)~Hadamard product of three mixture models of an independence model (restricted Boltzmann machine with three hidden units), 
		(d)~Hadamard product of three mixture models of a pairwise interaction model,  
		(e)~pairwise interaction mixture model of an independence model, 
		(f)~pairwise interaction mixture model of a pairwise interaction model. 
		From top to bottom and from left to right the models are more general. 
		From left to right these examples are covered in Theorems~\ref{theorem:mixtureskinteraction},~\ref{theorem:indhidintvis}~and~\ref{theorem:generalKronecker}. 
	}
	\label{fig0}
\end{figure}

\section{Mixture Models}
\label{section:secants}

\begin{definition}
	\label{eq:defmixture}
	The $k$-mixture of $\Ecal_A$ consists of all possible convex combinations of $k$ probability distributions from~$\Ecal_A$; that is, the probability distributions 
	\begin{gather*}
	p(x) = \sum_{i\in[k]}  \alpha(i) p^{(i)}(x), \quad x\in\Xcal, 
	\intertext{where}
	p^{(1)},\ldots,p^{(k)}\in\Ecal_A, \quad \alpha(1),\ldots, \alpha(k)\geq 0, \quad \sum_{i\in[k]}\alpha(i)=1. 
	\end{gather*}
\end{definition}
In algebraic geometry one usually considers secants instead of mixtures, in which case the weights $\alpha(i)$ in Definition~\ref{eq:defmixture} add to one but are not required to be non-negative. 
The resulting set is the union of possible affine hulls of $k$ points in $\Ecal_A$. 
The Zariski closure of the $k$-mixture of $\Ecal_A$ is $k$-th secant variety of~$\Ecal_A$. 
A standard reference on secant varieties is~\cite{zaktangents}. 

Mixtures of exponential families can be expressed as Kronecker product models:  
\begin{proposition}
	\label{proposition:base}
	Let $B$ have row span $\R^\Ycal$ and let $A$ include a row of ones. 
	Then the Kronecker product model $\Mcal_{A\otimes B}$ is the $|\Ycal|$-mixture of~$\Ecal_A$. 
\end{proposition}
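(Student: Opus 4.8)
The plan is to prove the two inclusions separately, in both cases reading off the mixture structure from the conditional decomposition in Equation~\ref{equation:conditionaldistributions}. Throughout I write $\eta_y := \Theta^\top B_y\in\R^a$ for the projected hidden parameters, $Z_A(\eta):=\sum_x\exp(\langle\eta,A_x\rangle)$ for the partition function of $\Ecal_A$, and $q_\eta(x):=\exp(\langle\eta,A_x\rangle)/Z_A(\eta)$ for the element of $\Ecal_A$ with natural parameter $\eta$. The single structural fact I will use on the hidden side is that ``row span $\R^\Ycal$'' forces the columns $B_y$, $y\in\Ycal$, to be linearly independent in $\R^b$; on the visible side I will use that $A$ contains the constant row $\mathds{1}$.

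For the inclusion $\Mcal_{A\otimes B}\subseteq(|\Ycal|\text{-mixture})$, I would start from $p_\theta(x)=\sum_y p_\theta(x,y)$ and apply Roth's lemma to rewrite $\langle\theta,A_x\otimes B_y\rangle=\langle\eta_y,A_x\rangle$. Factoring the partition function of $\Ecal_A$ out of each summand gives $p_\theta(x)=\sum_y\alpha(y)\,q_{\eta_y}(x)$ with $\alpha(y)=Z_A(\eta_y)/Z(\theta)$. Each $q_{\eta_y}$ lies in $\Ecal_A$ by construction and each $\alpha(y)$ is positive; the only thing left to verify is $\sum_y\alpha(y)=1$, which is immediate once one recognizes $\sum_y Z_A(\eta_y)=\sum_{x,y}\exp(\langle\eta_y,A_x\rangle)=Z(\theta)$. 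This exhibits $p_\theta$ as a convex combination of $|\Ycal|$ elements of $\Ecal_A$.

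The reverse inclusion is where the work is, and it is the step I expect to be the main obstacle, because in a marginal the weights $\alpha(y)$ and the components $q_{\eta_y}$ are not a priori independent: both are determined by the same $\Theta$. Given a target mixture $\sum_y\alpha(y)p^{(y)}$ with $p^{(y)}\in\Ecal_A$, I would first decouple weights from components using the constant row of $A$. For each $y$, pick any parameter $\hat\eta_y$ with $q_{\hat\eta_y}=p^{(y)}$, and then shift the component of $\hat\eta_y$ dual to the constant row $\mathds{1}$ by some $c_y$; this leaves the normalized distribution $q_{\hat\eta_y}$ unchanged but scales $Z_A$ by the $x$-independent factor $\exp(c_y)$. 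Hence the $c_y$ can be chosen so that the resulting masses $Z_A(\eta_y)$ are proportional to the prescribed $\alpha(y)$, which pins down each target $\eta_y$. It then remains to realize these $\eta_y$ simultaneously as $\Theta^\top B_y$: the $y$-th row of $B^\top\Theta$ equals $\eta_y^\top$, and since the columns $B_y$ are linearly independent the matrix $B^\top$ has full row rank $|\Ycal|$, so the linear system $B^\top\Theta=H$ with $H$ the matrix of rows $\eta_y^\top$ is solvable for every right-hand side, producing the required $\Theta$, equivalently $\theta$.

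Finally I would reconcile the nonnegativity convention of Definition~\ref{eq:defmixture} with the fact that a marginal always has strictly positive weights: the choice of $c_y$ above needs $\alpha(y)>0$. This causes no loss, because any mixture of $j\le|\Ycal|$ distributions can be rewritten as a strictly-positive-weight mixture of exactly $|\Ycal|$ distributions by splitting the weight of one component among several identical copies; since the weights of a mixture sum to one, at least one component has positive weight, so this rewriting is always available. Thus no closure is needed and the two sets coincide exactly.
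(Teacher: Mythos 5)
Your proof is correct and follows essentially the same route as the paper's: both exhibit the marginal as $\sum_y \alpha(y)\, q_{\eta_y}$ via the conditional decomposition, use the constant row of $A$ to tune the weights independently of the components, and use $\rank(B)=|\Ycal|$ to make the projected parameters $\eta_y=\Theta^\top B_y$ freely and independently choosable (the paper normalizes $B$ to the identity via Proposition~\ref{proposition:observations1}, whereas you solve the linear system $B^\top\Theta=H$ --- the same fact in different clothing). Your write-up is somewhat more careful than the paper's two-line argument, in particular in verifying $\sum_y\alpha(y)=1$ for the forward inclusion and in handling the zero-weight boundary case of Definition~\ref{eq:defmixture} by splitting a positive weight among repeated components, a point the paper's proof passes over silently.
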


\begin{proof}
	Without loss of generality let $B$ be the $|\Ycal|\times|\Ycal|$ identity matrix $I$. 
	The distributions of the exponential family $\Ecal_{A\otimes B}$ have the form $p_\theta(x,y) 
	= \frac{1}{Z(\theta)} \exp(\langle \Theta^\top B_y, A_x \rangle)  
	= \frac{1}{Z(\theta)} \exp(\langle \Theta^\top_y, A_x \rangle)$. 
	We may assume that the first row of $A$ is a vector of ones. 
	Hence, adding a suitable $\tilde\theta$ to the parameter vector, we obtain 
	$p_{\theta+\tilde{\theta}} (x,y) = \frac{1}{Z(\theta+\tilde\theta)}$ $\exp(\tilde \Theta^\top_{1,y}) \exp(\langle \Theta^\top_y, A_x \rangle)$. 
	The first term can be adjusted to obtain the mixture weights $\alpha$ from Equation~\ref{eq:defmixture} and the second term can be chosen independently for each value of $y$. 
\end{proof}

If $B$ has row span $\R^\Ycal$, we may assume that $B$ is the $|\Ycal|\times|\Ycal|$ identity matrix $I$. 
By Equation~\ref{eq:tropicalKronecker}, 
the tropical morphism has the following form: 
\begin{equation*}
\Acal_\theta = I\odot A_{h^{-1}_\theta}  =
\left(\begin{array}{c}
\sm{\boxed{A_{C_1}}& & &\\ & \boxed{A_{C_2}} & &\\ &&\ddots&\\ & & & \boxed{A_{C_{|\Ycal|}}}}
\end{array}
\right). 
\end{equation*}
In particular, the rank is just the sum of the ranks of the individual diagonal blocks, 
\begin{equation*}
\rank(\Acal_\theta) = \sum_{y\in\Ycal} \rank(A_{C_y}). 
\end{equation*}
In order to estimate the maximum of $\rank(\Acal_\theta)$ over $\theta$, an obvious strategy is to search for a slicing that produces as many full-rank blocks $A_{C_y}$ as possible. 
When $B$ is an identity matrix, the slicings can be constructed using any matrix $\Theta^\top B =\Theta^\top\in\R^{a\times b}$. 
By Lemma~\ref{lemma:fullranksubmatrix}, if $A$ is the sufficient statistics matrix of $\Ecal_\Lambda$ and $C_y$ contains, or is contained in, a $\Lambda$-ball, 
then the block $A_{C_y}$ has full rank. 
As we will show below, inference regions $C_y$ containing interaction balls can be obtained as slicings of $A$ by matrices of the form $\Theta^\top B = (A_{c_y})_y$, 
where $c_y$ are the centers of disjoint $\Lambda$-balls. 
We focus on $k$-interaction models. 

\begin{lemma}
	\label{lemma:asd}
	Let $\Xcal=\Xcal_1\times\cdots\times\Xcal_n$, 
	let $A$ have row span $V_k(\Xcal)$, 
	and let $B$ have row span $\R^\Ycal$. 
	Given $|\Ycal|$ disjoint $\Lambda_k$-balls in $\Xcal$, denoted $K(c_y,\Lambda_k)$, $y\in\Ycal$, 
	there is a $B$-slicing of $A$ with $C_y\supseteq K(c_y,\Lambda_k)$ for all $y\in\Ycal$. 
\end{lemma}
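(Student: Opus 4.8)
The plan is to exhibit an explicit parameter $\theta$ whose induced inference function is nearest-center classification with respect to the Hamming distance, and to show that each Hamming ball is then correctly classified. First I would reduce the problem to a choice of score functions. Since $B$ has row span $\R^\Ycal$, it has rank $|\Ycal|$, so its $|\Ycal|$ columns $B_y$ are linearly independent; hence for any prescribed vectors $r_y\in\R^a$, $y\in\Ycal$, there is a matrix $\Theta$ with $\Theta^\top B_y=r_y$ for all $y$. Recalling that $h_\theta(x)=\argmax_y\langle A_x,\Theta^\top B_y\rangle$, and that the assignment $r\mapsto(\langle A_x,r\rangle)_x = r^\top A$ ranges exactly over the row span $V_k(\Xcal)$ of $A$ as $r$ ranges over $\R^a$, the task becomes purely combinatorial: find functions $g_y\in V_k(\Xcal)$ such that, for every $y$ and every $x\in K(c_y,\Lambda_k)$, one has $g_y(x)>g_{y'}(x)$ for all $y'\neq y$. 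Any such choice yields $h_\theta(x)=\{y\}$, hence $C_y=h_\theta^{-1}(y)\supseteq K(c_y,\Lambda_k)$, which is exactly the desired $B$-slicing.

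The construction I would use is $g_y(x):=-d_H(x,c_y)$. Because $d_H(x,c_y)=\sum_{i\in[n]}\mathds{1}[x_i\neq c_{y,i}]$ is a sum of functions each depending on a single coordinate, it lies in $V_1(\Xcal)\subseteq V_k(\Xcal)$, so $g_y$ is a legitimate score realized by some $r_y\in\R^a$ with $r_y^\top A=g_y$. The resulting inference function is nearest-(Hamming)-center classification, $h_\theta(x)=\argmax_y\bigl(-d_H(x,c_y)\bigr)$.

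The verification is where the disjointness hypothesis enters. Fix $x\in K(c_y,\Lambda_k)$, so that $d_H(x,c_y)\leq k$. Disjointness of the balls gives $x\notin K(c_{y'},\Lambda_k)$ for every $y'\neq y$, which by definition of $\Lambda_k$ means $d_H(x,c_{y'})\geq k+1$. Therefore $g_y(x)-g_{y'}(x)=d_H(x,c_{y'})-d_H(x,c_y)\geq 1$ for all $y'\neq y$, so $y$ is the unique maximizer and $x\in C_y$; this establishes $C_y\supseteq K(c_y,\Lambda_k)$. To upgrade this to a genuine (single-valued) slicing I would break the ties that may occur at points $x$ lying outside every ball by replacing each $r_y$ with $r_y+\varepsilon s_y$ for generic $s_y$ and sufficiently small $\varepsilon>0$: the uniform margin $\geq 1$ just established guarantees that the strict inequalities on the balls survive the perturbation, while genericity removes ties elsewhere.

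Once the right score is identified the steps are essentially forced, so the only genuine obstacle is conceptual: recognizing that the inference function can be driven to Hamming nearest-center classification, and that this classifier is expressible inside $V_1(\Xcal)\subseteq V_k(\Xcal)$. The remaining ingredients—the linear independence of the columns of $B$ coming from the full-row-span hypothesis, and the passage from a possibly set-valued $\argmax$ to an honest slicing by a generic perturbation—are routine.
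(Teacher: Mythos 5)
Your proof is correct, and it shares the paper's overall strategy---use the full column rank of $B$ to prescribe $\Theta^\top B_y$ freely, then choose the scores so that the inference function is nearest-center classification---but the way you realize the scores is genuinely different and arguably cleaner. The paper fixes a particular $\pm 1$ sufficient statistics matrix $A$ and sets $\Theta^\top B_y = A_{c_y}$, so the score of $y$ at $x$ is $\langle A_x, A_{c_y}\rangle$; the separation between balls then rests on computing $d_H(A_x,A_{c_y})$ as twice the number of interaction sets $\lambda\in\Lambda_k$ on which $x$ and $c_y$ differ, with explicit bounds ($\leq 2(2^k-1)$ inside a ball, $\geq 2(2^k-1)+1$ outside); those particular constants are actually questionable for $n>k$ (e.g.\ $n=4$, $k=2$, $d_H(x,x')=1$ gives $4>2^k-1$ intersecting sets), and what really drives the paper's argument is that this count is strictly monotone in $d_H(x,c_y)$. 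You bypass this combinatorics entirely by prescribing the score function itself: since the map $r\mapsto r^\top A$ ranges over the row span, and $-d_H(\cdot,c_y)$ is a sum of single-coordinate functions, hence lies in $V_1(\Xcal)\subseteq V_k(\Xcal)$, the score is realizable and the margin $g_y(x)-g_{y'}(x)\geq 1$ on $K(c_y,\Lambda_k)$ is immediate from disjointness of radius-$k$ Hamming balls. This buys three things: an alphabet-size-independent, computation-free separation; the observation that the lemma only needs the row span of $A$ to \emph{contain} $V_1(\Xcal)$, a weaker hypothesis than stated; and an explicit tie-breaking step, which addresses the single-valuedness required by Definition~\ref{definition:slicing} and which the paper's proof passes over in silence (ties can occur at points outside all balls under either construction). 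The one detail worth spelling out in your perturbation step is why a generic $s$ breaks all ties simultaneously: each column $A_x$ is nonzero (as $\mathds{1}$ lies in the row span of $A$), so for each $x$ and each pair $y\neq y'$ the condition $\langle A_x, s_y - s_{y'}\rangle = 0$ cuts out a proper subspace, and there are finitely many such conditions; your margin of $1$ then protects the inequalities on the balls for small $\varepsilon>0$.
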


\begin{proof}
	Without loss of generality we choose a matrix $A$ with entries 
	$A_{(\lambda, \tilde x_\lambda), x}$ equal to $1$ if $x_\lambda =\tilde x_\lambda$ and 
	$-1$ otherwise, for $\lambda\in\Lambda_k$, $\tilde x_\lambda\in\Xcal_\lambda$, and $x\in\Xcal$. 
	Denoting the number of rows by $a=\sum_{\lambda\in\Lambda_k}\prod_{i\in\lambda}|\Xcal_i|$ we have that $\langle A_x,A_{x'}\rangle = a - 2d_H(A_x,A_{x'})$. 
	Furthermore, $d_H(A_x,A_{x'}) = 2|\{ \lambda\in\Lambda_k\colon x_\lambda \neq x'_\lambda \}|$. 
	If $x'\in K(x,\Lambda_k)$, then $d_H(A_x, A_{x'})\leq 2( 2^{k}-1)$. 
	On the other hand, if $x'\not\in K(x,\Lambda_k)$, then 
	$d_H(A_x, A_{x'})\geq 2( 2^{k}-1) + 1$. 
	Hence choosing $\Theta$ such that $\Theta^\top B_y = A_{c_y}$ for all $y\in\Ycal$, yields $C_y\supseteq K(c_y,\Lambda_k)$ for all $y\in\Ycal$. 
\end{proof}

\begin{theorem}
	\label{theorem:mixtureskinteraction}
	Let $\Xcal=\Xcal_1\times\cdots\times\Xcal_n$, 
	let $A$ have row span $V_k(\Xcal)$, 
	and let $B$ have row span $\R^\Ycal$. 
	\begin{itemize}
		\item 
If $\Xcal$ contains $|\Ycal|$ disjoint radius-$k$ Hamming balls, then 
		\begin{equation*}
		\dim(\TMcal_{A\otimes B}) = |\Ycal| \left(1 + \sum_{\lambda\in\Lambda_k\setminus\emptyset} \prod_{i\in\lambda} (|\Xcal_i|-1) \right)   -1.
		\end{equation*} 
\item 
		If $\Xcal$ can be covered by $|\Ycal|$ radius-$k$ Hamming balls, then 
		$$\dim(\TMcal_{A\otimes B}) = |\Xcal| -1 . $$
	\end{itemize}
\end{theorem}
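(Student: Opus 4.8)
The plan is to reduce the computation of $\dim(\TMcal_{A\otimes B})$ to a purely combinatorial optimization over $B$-slicings of $A$, and then to exhibit an optimal slicing in each of the two regimes. Since $B$ has row span $\R^\Ycal$, by Proposition~\ref{proposition:base} and the remark following it we may take $B=I$, in which case $\Acal_\theta$ is block diagonal with diagonal blocks $A_{C_y}$ indexed by the $B$-slicing $\{C_y\}_{y\in\Ycal}$ determined by $h_\theta$. Hence $\rank(\Acal_\theta)=\sum_{y\in\Ycal}\rank(A_{C_y})$, and by Proposition~\ref{proposition:rankJacobian} together with Definition~\ref{definition:tropicalmorphism},
\begin{equation*}
\dim(\TMcal_{A\otimes B}) = \max_\theta \rank(\Acal_\theta) - 1 = \max_{\{C_y\}}\sum_{y\in\Ycal}\rank(A_{C_y}) - 1,
\end{equation*}
where the final maximum is over all $B$-slicings of $A$. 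Both claims then amount to producing a slicing that pushes this sum up to the relevant upper bound.

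For the first part, the upper bound is immediate: every block satisfies $\rank(A_{C_y})\le\rank(A)=K(\Lambda_k)$ by Proposition~\ref{lemma:row-space}, so the sum is at most $|\Ycal|\,K(\Lambda_k)$. For the matching lower bound I would feed the $|\Ycal|$ disjoint radius-$k$ Hamming balls into Lemma~\ref{lemma:asd}, obtaining a $B$-slicing with $C_y\supseteq K(c_y,\Lambda_k)$ for every $y$. Lemma~\ref{lemma:fullranksubmatrix} then gives $\rank(A_{C_y})\ge\rank(A_{K(c_y,\Lambda_k)})=K(\Lambda_k)$, and with the trivial reverse inequality each block has full rank $K(\Lambda_k)$. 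Summing over $y$ yields $\rank(\Acal_\theta)=|\Ycal|\,K(\Lambda_k)$, which matches the upper bound and proves the first claim after subtracting $1$.

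For the second part, the upper bound $\dim(\TMcal_{A\otimes B})\le|\Xcal|-1$ is automatic because $\Acal_\theta$ has exactly $|\Xcal|$ columns. For the lower bound I would reuse the nearest-center construction from the proof of Lemma~\ref{lemma:asd}: taking $\Theta$ with $\Theta^\top B_y=A_{c_y}$ for the centers $c_y$ of the covering balls, the inner product $\langle A_x,A_{c_y}\rangle$ is a decreasing function of $d_H(x,c_y)$, so the inference function assigns each $x$ to a center of minimal Hamming distance. Because the balls cover $\Xcal$, each $x$ lies within distance $k$ of some center, hence within distance $k$ of its assigned center, which forces $C_y\subseteq K(c_y,\Lambda_k)$. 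By Lemma~\ref{lemma:fullranksubmatrix} the columns of $A_{K(c_y,\Lambda_k)}$ are linearly independent, so the columns of the sub-block $A_{C_y}$ are too, giving $\rank(A_{C_y})=|C_y|$. Since the $C_y$ partition $\Xcal$, the ranks sum to $|\Xcal|$, matching the upper bound.

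The main obstacle is the passage, in the covering case, from an overlapping cover to a genuine single-valued slicing: points lying in several balls produce ties in the $\argmax$, so $h_\theta$ need not be single-valued. I would resolve this by perturbing $\Theta$ generically, verifying that a sufficiently small perturbation only reassigns tied points among their nearest (distance $\le k$) centers and therefore preserves the containment $C_y\subseteq K(c_y,\Lambda_k)$ on which the rank count depends. The disjoint case of the first part avoids this issue because the relevant points lie in a unique ball and are strictly closest to its center, exactly as in Lemma~\ref{lemma:asd}. A secondary point to check is the monotonicity of $|\{\lambda\in\Lambda_k\colon x_\lambda\neq (c_y)_\lambda\}|$ in $d_H(x,c_y)$, which holds because $\Lambda_k$ contains all singletons, so that minimizing the number of disagreeing interactions coincides with minimizing Hamming distance.
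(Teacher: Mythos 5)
Your proposal is correct and takes essentially the same route as the paper: Lemma~\ref{lemma:asd} combined with Lemma~\ref{lemma:fullranksubmatrix} for the disjoint case, and the nearest-center slicing $\Theta^\top B_y = A_{c_y}$ with within-ball linear independence for the covering case. Your added details (the explicit upper bounds, the generic perturbation to break ties, and the monotonicity of $|\{\lambda\in\Lambda_k\colon x_\lambda\neq (c_y)_\lambda\}|$ in $d_H(x,c_y)$ --- which really rests on $\Lambda_k$ being permutation-symmetric and inclusion-closed, not merely on containing singletons) only make explicit what the paper's terse proof leaves implicit.
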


\begin{proof}
	If $K(c_y, \Lambda_k)$, $y\in\Ycal$, are disjoint, 
	then by Lemma~\ref{lemma:asd} we obtain a slicing with $C_y \supseteq K(c_y, \Lambda_k)$  for all $y\in\Ycal$. 
	By Lemma~\ref{lemma:fullranksubmatrix} we have $\rank(A_{C_y}) = \rank(A)$ for all $y\in\Ycal$. 
	This yields $\rank(\Acal_\theta)=\sum_y\rank(A_{C_y}) = |\Ycal|\rank(A)$. 
	If $K(c_y, \Lambda_k)$, $y\in\Ycal$, cover $\Xcal$, 
	then $C_y \subseteq K(c_y,\Lambda_k)$ and $\rank(A_{C_y}) = |C_y|$ for all $y\in\Ycal$. 
	This yields $\rank(\Acal_\theta)=\sum_y\rank(A_{C_y}) = \sum_y|C_y|=|\Xcal|$. 
\end{proof}

We note the following special case where $\Ecal_A$ is an independence model. 
This case has been covered previously in Draisma's tropical approach to secant dimensions~\cite{Draisma}. 
The corresponding implications for the dimension of (not tropical) mixtures of independence models have also been studied before in algebraic geometry and tensor analysis; see~\cite{Catalisano2002263,AboOttavianiPeterson,landsbergtensors}. 

\begin{corollary}
	\label{theorem:secants-independence-model}
	Let $\Xcal=\Xcal_1\times\cdots\times\Xcal_n$, 
	let $A$ have row span $V_1(\Xcal)$, and let $B$ have row span $\R^\Ycal$. 
	\begin{itemize}
		\item
		If $\Xcal$ contains $|\Ycal|$ disjoint radius-one Hamming balls, then 
		\begin{equation*}
		\dim ( \TMcal_{A\otimes B} ) = |\Ycal| \left( 1 + \sum_{i\in[n]}(|\Xcal_i|-1) \right) - 1 .
		\end{equation*}
		\item 
		If $\Xcal$ can be covered by $|\Ycal|$ radius-one Hamming balls, then 
		\begin{equation*}
		\dim ( \TMcal_{A\otimes B} ) = |\Xcal|-1 .
		\end{equation*}  
	\end{itemize}
\end{corollary}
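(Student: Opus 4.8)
The plan is to obtain this corollary as the specialization of Theorem~\ref{theorem:mixtureskinteraction} to the case $k=1$. First I would observe that for $k=1$ the interaction set $\Lambda_1$ consists of the empty set together with all singletons $\{i\}$, $i\in[n]$, so that the $\Lambda_1$-balls centered at a point $x\in\Xcal$ are precisely the radius-one Hamming balls: they collect $x$ itself together with all points differing from $x$ in exactly one coordinate. Consequently the hypotheses ``$\Xcal$ contains $|\Ycal|$ disjoint radius-one Hamming balls'' and ``$\Xcal$ can be covered by $|\Ycal|$ radius-one Hamming balls'' are exactly the $k=1$ instances of the corresponding hypotheses in Theorem~\ref{theorem:mixtureskinteraction}.

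Next I would verify that the dimension formula specializes correctly. By Proposition~\ref{lemma:row-space}, the only nonempty members of $\Lambda_1$ are singletons, so the product $\prod_{i\in\lambda}(|\Xcal_i|-1)$ reduces to $|\Xcal_i|-1$ when $\lambda=\{i\}$, and the sum $\sum_{\lambda\in\Lambda_1\setminus\emptyset}\prod_{i\in\lambda}(|\Xcal_i|-1)$ collapses to $\sum_{i\in[n]}(|\Xcal_i|-1)$. Substituting this into the two formulas of Theorem~\ref{theorem:mixtureskinteraction} immediately yields the claimed expressions $|\Ycal|\big(1+\sum_{i\in[n]}(|\Xcal_i|-1)\big)-1$ in the disjoint case and $|\Xcal|-1$ in the covering case.

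Since the entire argument is a direct specialization, there is essentially no obstacle to surmount here: the real work has already been done in Lemma~\ref{lemma:asd}, which produces a $B$-slicing whose blocks $C_y$ contain the prescribed $\Lambda_1$-balls, and in Lemma~\ref{lemma:fullranksubmatrix}, which guarantees that each such block has full rank $1+\sum_{i\in[n]}(|\Xcal_i|-1)$. The point worth emphasizing for context is that the independence model $\Ecal_{\Lambda_1}$ is the Segre variety, so this corollary recovers the tropical secant dimension estimates for Segre products obtained in Draisma's work as a clean special case of the hierarchical-model framework.
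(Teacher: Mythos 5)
Your proposal is correct and matches the paper's treatment: the paper states this corollary without a separate proof, precisely because it is the immediate specialization of Theorem~\ref{theorem:mixtureskinteraction} to $k=1$, where $\Lambda_1$-balls are radius-one Hamming balls and the sum $\sum_{\lambda\in\Lambda_1\setminus\emptyset}\prod_{i\in\lambda}(|\Xcal_i|-1)$ collapses to $\sum_{i\in[n]}(|\Xcal_i|-1)$. Your verification of these two specialization steps, together with the attribution of the underlying work to Lemmas~\ref{lemma:asd} and~\ref{lemma:fullranksubmatrix}, is exactly the intended argument.
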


\section{Hadamard Products}
\label{section:Hadamard}

\begin{definition}
	The Hadamard product $\Mcal_1\ast\Mcal_2$ of two probability models $\Mcal_1, \Mcal_2$ on $\Xcal$ is the set of all probability distributions of the form 
	\begin{gather*}
	(p\ast q)(x) = \frac{p(x)q(x)}{\sum_{x'\in\Xcal} p(x')q(x')}, \quad x\in\Xcal, \quad \text{where }
	p\in\Mcal_1\text{ and } q\in\Mcal_2. 
	\end{gather*}
	The Hadamard product of $\Mcal_1,\ldots, \Mcal_m$ is defined in an analogous way. 
\end{definition}

Consider $m$ independent hidden variables, each with an associated exponential family. 
In other words, let $B^j\in\R^{b_j\times \Ycal_j}$, $j\in[m]$, and let $B\in\R^{b\times \Ycal}$ be the matrix with columns $B_y = (B^{1}_{y_1};\ldots;B^{m}_{y_m})$, $y\in\Ycal$.  
The corresponding exponential family factorizes as $\Ecal_B=\Ecal_{B^1}\otimes\cdots\otimes\Ecal_{B^m}$, where $\Ecal_{B^j}$ is an exponential family on $\Ycal_j$, for each $j\in[m]$. 
A Kronecker product model with independent hidden variables is the Hadamard product of the marginal models for the individual hidden variables: 

\begin{proposition}
	Let $\Ycal=\Ycal_1\times\cdots\times\Ycal_m$ and let $B=(B^{1};\ldots; B^{m})$ with $B^{j}_y=B^{j}_{y_j}$, $y\in\Ycal$. 
	Then $\Mcal_{A\otimes B} = \Mcal_{A\otimes B^1}\ast \cdots \ast \Mcal_{A\otimes B^m}$. 
\end{proposition}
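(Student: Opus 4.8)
The plan is to show that both probability models coincide by comparing their parametrizations directly, exploiting the fact that the Kronecker product over independent hidden variables factorizes. The statement is really an algebraic identity about how the joint distribution $p_\theta(x,y)$ of $\Ecal_{A\otimes B}$ decouples across the hidden blocks $y=(y_1,\ldots,y_m)$ once we marginalize over $y$. The key observation, which I would invoke from Equation~\ref{equation:conditionaldistributions} together with the assumption $B_y=(B^1_{y_1};\ldots;B^m_{y_m})$, is that the sufficient statistics split as a direct sum over the $m$ hidden factors. Concretely, with $F=A\otimes B$ and $B$ block-structured, the natural parameter $\theta\in\R^{ab}$ organizes into blocks $\theta^j$ coupling $A$ with each $B^j$, so that $\langle\theta,A_x\otimes B_y\rangle=\sum_{j\in[m]}\langle\theta^j,A_x\otimes B^j_{y_j}\rangle$.

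First I would write the visible marginal of $\Mcal_{A\otimes B}$ explicitly. Summing the joint distribution over $y\in\Ycal=\Ycal_1\times\cdots\times\Ycal_m$, the sum factorizes:
\begin{equation*}
\sum_{y\in\Ycal}\exp\Big(\sum_{j\in[m]}\langle\theta^j,A_x\otimes B^j_{y_j}\rangle\Big)
=\prod_{j\in[m]}\Big(\sum_{y_j\in\Ycal_j}\exp\langle\theta^j,A_x\otimes B^j_{y_j}\rangle\Big).
\end{equation*}
Each factor in the product over $j$ is, up to normalization, precisely an unnormalized marginal $p_{\theta^j}(x)$ of the single-hidden-variable model $\Mcal_{A\otimes B^j}$. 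After renormalizing, the visible marginal of $\Mcal_{A\otimes B}$ is therefore proportional to the entrywise product $\prod_{j\in[m]}p_{\theta^j}(x)$, which is exactly the defining form of the Hadamard product $\Mcal_{A\otimes B^1}\ast\cdots\ast\Mcal_{A\otimes B^m}$.

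To finish I would argue both inclusions. For $\Mcal_{A\otimes B}\subseteq\Mcal_{A\otimes B^1}\ast\cdots\ast\Mcal_{A\otimes B^m}$, any choice of $\theta$ yields blocks $\theta^1,\ldots,\theta^m$ and hence factors $p_{\theta^j}\in\Mcal_{A\otimes B^j}$ whose Hadamard product is the given marginal. Conversely, given any $p^j\in\Mcal_{A\otimes B^j}$ realized by parameters $\theta^j$, assembling these into a single $\theta$ produces a distribution in $\Mcal_{A\otimes B}$ whose marginal is the desired Hadamard product; here I would use that the row space of each $B^j$ contains a constant vector, so the individual normalizing constants can be absorbed without changing the visible marginal. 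The main obstacle, and the point requiring care rather than difficulty, is bookkeeping the normalization: the product of the $m$ separately normalized marginals and the single jointly normalized marginal differ by a global constant, and one must check this constant is absorbed correctly by the Hadamard renormalization so that the two sets of distributions genuinely coincide rather than merely being proportional.
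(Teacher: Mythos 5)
Your proof is correct and follows essentially the same route as the paper: splitting $\theta$ into blocks $\theta^j$, factorizing the sum over $y\in\Ycal$ into a product of sums over each $\Ycal_j$, and identifying each factor with an unnormalized marginal of $\Mcal_{A\otimes B^j}$. The paper's proof stops at the factorization identity, while your added bookkeeping of the two inclusions and the normalization constants simply makes explicit what the paper leaves implicit (proportional normalized distributions coincide).
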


\begin{proof}
	Consider a parameter vector $\theta = (\theta^1;\ldots;\theta^m)\in\R^{a\cdot b}$ with blocks corresponding to the blocks of $B=(B^1;\cdots;B^m)$. 
	We have 
	\begin{align*}
	p(x) 
	=& \sum_{y\in\Ycal} \frac{1}{Z(\theta)} \exp(  \langle \theta , A_x\otimes B_y\rangle )\\
	=& \sum_{y_1\in\Ycal_1}\cdots\sum_{y_m\in\Ycal_m} \frac{1}{Z(\theta)} \exp( \sum_{j\in[m]} \langle \theta^j, A_x\otimes B^j_{y_j}\rangle )\\
	=& \frac{1}{Z(\theta)} \prod_{j\in[m]}\sum_{y_j\in\Ycal_j} \exp( \langle \theta^j, A_x\otimes B^j_{y_j}\rangle ). 
	\end{align*}
	This proves the claim. 
\end{proof}

The tropical morphism of a Hadamard product decomposes into individual factor parts: 
\begin{lemma}\label{tropicalRBM}
	Let $\Ycal=\Ycal_1\times\cdots\times\Ycal_m$, 
	and let 
	$B=(B^{1};\ldots; B^{m})$ with $B^{j}_y=B^{j}_{y_j}$, $y\in\Ycal$. 
	Then 
	$\Acal_\theta = (\Acal_{\theta^1};\cdots;\Acal_{\theta^m})$, where $\Acal_{\theta^j} = A\odot B^j_{{h}_{\theta^j}}$, $\theta =(\theta^1,\ldots, \theta^m) \in\R^{a\cdot b}$. 
\end{lemma}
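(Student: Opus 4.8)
The plan is to reduce everything to a block decomposition of the inference function. First I would expand the inner product that defines $h_\theta$. Writing $\theta=(\theta^1;\ldots;\theta^m)$ and $B_y=(B^1_{y_1};\ldots;B^m_{y_m})$ in matching blocks, the computation already carried out in the proof of the preceding proposition yields
\[
\langle\theta, A_x\otimes B_y\rangle=\sum_{j\in[m]}\langle\theta^j, A_x\otimes B^j_{y_j}\rangle .
\]
Thus the linear form whose maximizers over $y$ define $h_\theta(x)$ separates into a sum of $m$ terms, the $j$-th of which depends on $y$ only through the coordinate $y_j$.

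Next I would factor the maximization. Since the sum splits over independent coordinates $y_1,\ldots,y_m$, its set of maximizers is the Cartesian product $h_\theta(x)=h_{\theta^1}(x)\times\cdots\times h_{\theta^m}(x)$, where $h_{\theta^j}(x)=\argmax_{y_j}\langle\theta^j,A_x\otimes B^j_{y_j}\rangle$ is exactly the inference function of the single-factor model $\Mcal_{A\otimes B^j}$. I would then transport this product structure to the columns of $\Acal_\theta$. Using the block identity $A_x\otimes(v^1;\ldots;v^m)=(A_x\otimes v^1;\ldots;A_x\otimes v^m)$, the $x$-th column $\bar F(x,h_\theta(x))$ decomposes into $m$ stacked blocks. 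Because $h_\theta(x)$ is a product set, the average defining $\bar F$ restricts in the $j$-th block to an average over $h_{\theta^j}(x)$ alone (the cardinalities of the other factors cancel), leaving $A_x\otimes B^j_{h_{\theta^j}(x)}$. By Equation~\ref{eq:tropicalKronecker} this is precisely the $x$-th column of $\Acal_{\theta^j}=A\odot B^j_{h_{\theta^j}}$, and stacking over $j\in[m]$ gives $\Acal_\theta=(\Acal_{\theta^1};\cdots;\Acal_{\theta^m})$.

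All the individual steps are elementary; the only point requiring genuine care is the last one, verifying that the average over the product maximizer set $h_\theta(x)$ collapses blockwise to the correct single-factor average. I expect this bookkeeping, namely the cancellation of the off-factor cardinalities, to be where the argument actually carries content. In the generic situation where each $h_{\theta^j}(x)$ is a singleton, this subtlety vanishes entirely and the decomposition follows immediately from the form $\Acal_\theta=A\odot B_{h_\theta}$ in Equation~\ref{eq:tropicalKronecker}.
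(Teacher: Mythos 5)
Your proof is correct and follows essentially the same route as the paper's: split $\theta$ and $B$ into matching blocks so that $\langle\theta, A_x\otimes B_y\rangle=\sum_{j}\langle\theta^j, A_x\otimes B^j_{y_j}\rangle$, observe that the maximization factors over the independent coordinates $y_j$, and read off the block structure of the columns of $\Acal_\theta$. Your explicit verification that the average over the product maximizer set $h_\theta(x)=h_{\theta^1}(x)\times\cdots\times h_{\theta^m}(x)$ collapses blockwise (with the off-factor cardinalities cancelling) is a welcome refinement the paper leaves implicit, since its proof only tracks the maximum values and the generic singleton case.
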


\begin{proof}
	We divide the parameter vector as $\theta = (\theta^1,\ldots, \theta^m)$ according to the blocks of $B=(B^1;\cdots; B^m)$, 
	such that $\langle\theta,  A\otimes B \rangle = \sum_{j\in[m]}\langle \theta^j , A\otimes B^j \rangle$. 
	For any given visible state $x\in\Xcal$ we have 
	\begin{align*}
	\max\left\{\langle \theta, A_x\otimes B_y \rangle \colon y\in\Ycal\right\} 
	=& \sum_{j\in[m]} \max\{\langle \theta^j, A_x\otimes B^j_{y_j}\rangle\colon y_j\in\Ycal_j\} \\
	=& \sum_{j\in[m]} \langle\theta^j, A_x \otimes B^j_{h_{\theta^j}(x)} \rangle \\
	=&  \sum_{j\in[m]} \langle{\theta^j}, \Acal_{\theta^j}\rangle,
	\end{align*}
	where $h_{\theta^j}$ is the inference function with parameter $\theta^j$ that maps each visible state $x$ to the most likely state $y_j$ of the $j$-th hidden variable. 
	This completes the proof. 
\end{proof}

In the following we focus on the case where each $B^j$ has row space $\R^{\Ycal_j}$. 
In this case each Hadamard factor of the Kronecker product model $\Mcal_{A\otimes B}$ is a mixture model of $\Ecal_A$. 
Choosing each $B^j$ equal to the $|\Ycal_j|\times |\Ycal_j|$ identity matrix, the tropical morphism takes the form 
\begin{equation}
\Acal_\theta =
\left(\begin{array}{c}
\sm{\boxed{\quad A_{C^1_1}\quad} & \\ & \boxed{\quad A_{C^1_2}\quad} } 
\\\midrule[0.03em]
\vdots \\\midrule[0.03em]
\sm{\boxed{A_{C^m_1}}& & \\ & \boxed{A_{C^m_2}} & \\ & & \boxed{A_{C^m_3}}}
\end{array}
\right). 
\label{proposition:rank-Hadamard-disjoint}
\end{equation}
Here $C^j_{y_j}:=h^{-1}_{\theta^j}(y_j)$, $y_j\in\Ycal_j$, is the slicing of $A$ by $B^j$, for all $j\in[m]$. 
An obvious strategy to maximize the rank is to construct the $m$ slicings of $A$ in such a way that we obtain as many disjoint sets $C^j_{y_j}$ with full rank $A_{C^j_{y_j}}$ as possible.

We present a construction based on truncated slicings, where each slicing divides the columns of $A$ in two sets, and then subdivides one of the two sets. 

\begin{lemma}
	\label{lemma:truncatedslicing}
	Let $A$ be some matrix. Let $I_M$ denote the $M\times M$ identity matrix. 
	Let $C_1,\ldots, C_{N}$ be an $I_N$-slicing of $A$ and let $D_1, D_2$ be an $I_2$ slicing of $A$. 
	Then $D_2\cap C_1,\ldots, D_2\cap C_N, D_1$ is a $I_{N+1}$ slicing of~$A$. 
\end{lemma}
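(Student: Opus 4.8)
The plan is to translate both slicings into their defining linear functionals and then exhibit explicit functionals realizing the combined $(N+1)$-way slicing. Recall from Definition~\ref{definition:slicing} that an $I_M$-slicing of $A$ is obtained from a matrix $\Theta\in\R^{M\times a}$ with rows $v_1,\dots,v_M\in\R^a$ by assigning each column $A_x$ to the index $y$ maximizing $\langle v_y,A_x\rangle$, and the standing assumption that the inference function is single-valued means that for every column this maximizer is unique. So first I would record the data of the two given slicings: let $v_1,\dots,v_N\in\R^a$ be the rows of a $\Theta$ realizing the $I_N$-slicing $C_1,\dots,C_N$, and let $w_1,w_2\in\R^a$ be the two rows realizing the $I_2$-slicing $D_1,D_2$.

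Next I would reduce the $I_2$-slicing to a single hyperplane. Writing $w:=w_1-w_2$ and $\delta(x):=\langle w,A_x\rangle$, single-valuedness of the $I_2$-slicing forces $\delta(x)\neq 0$ for every $x\in\Xcal$, so that $D_1=\{x:\delta(x)>0\}$ and $D_2=\{x:\delta(x)<0\}$. In particular no column lies on the separating hyperplane, which eliminates the only potential source of ties.

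Then I would build the $N+1$ functionals for the target slicing by keeping $u_y:=v_y$ for $y=1,\dots,N$ and introducing one new functional $u_{N+1}:=c\,w$ with $c>0$ a large constant. The key computation is the comparison $\langle u_{N+1}-u_y,A_x\rangle = c\,\delta(x)-\langle v_y,A_x\rangle$. Here I would invoke the finiteness of the column set $\Xcal$: setting $R:=\max_{x,y}|\langle v_y,A_x\rangle|$ and letting $\delta_+>0$ and $\delta_-<0$ be the smallest positive and largest negative values of $\delta$, any $c>R/\min\{\delta_+,|\delta_-|\}$ forces $u_{N+1}$ to strictly beat every $u_y$ on $D_1$ and to strictly lose to every $u_y$ on $D_2$. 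Consequently the index maximizing $\langle u_y,A_x\rangle$ over $y\in\{1,\dots,N+1\}$ equals $N+1$ for all $x\in D_1$, whereas on $D_2$ it reproduces the original $I_N$-slicing since there $u_{N+1}$ is never the maximizer and $u_y=v_y$. The resulting cells are therefore exactly $D_2\cap C_1,\dots,D_2\cap C_N$ and $D_1$, and these functionals constitute a valid $I_{N+1}$-slicing, proving the claim.

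I expect the only delicate point to be making the single new functional $u_{N+1}$ dominate precisely on $D_1$ and nowhere on $D_2$, \emph{uniformly over all columns at once}. This is exactly what the finiteness of $\Xcal$ together with the strict separation $\delta(x)\neq 0$ (itself forced by single-valuedness of a genuine slicing) buys: one choice of $c$ works simultaneously for every column. The degenerate cases $D_1=\es$ or $D_2=\es$ are immediate, since then the target partition is just the $I_N$-slicing with one empty cell appended, and the argument above is insensitive to how the boundary would split because a genuine slicing places no columns there.
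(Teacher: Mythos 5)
Your proof is correct and takes essentially the same route as the paper's: both arguments scale the separating functional of the $I_2$-slicing by a large constant $c$ and append it to the functionals of the $I_N$-slicing, your choice $u_y=v_y$, $u_{N+1}=c(w_1-w_2)$ being exactly the paper's $B'''=(B'_1+cB''_2,\ldots,B'_N+cB''_2,\,cB''_1)$ after subtracting the common vector $cB''_2$ from every functional, which leaves all pairwise comparisons unchanged. Your explicit threshold $c>R/\min\{\delta_+,|\delta_-|\}$ and the observation that single-valuedness forces $\delta(x)\neq 0$ merely make quantitative what the paper compresses into ``choosing $c>0$ large enough.''
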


\begin{proof}
	Let $B' =(B'_1,\ldots, B'_N)$ produce the slicing $C_1,\ldots, C_N$. 
	This means that $\langle A_x, B'_i -B'_j \rangle > 0$ for all $j\neq i$ iff $x\in C_i$, for all $i\in [N]$. 
	Let $B''=(B''_1, B''_2)$ produce the slicing $D_1,D_2$. 
	This means that $\langle A_x, B''_1-B''_2\rangle >0$ iff $x\in D_1$. 
	Note that $cB''$ produces the same slicing, for any $c>0$. 
	
	Now consider the matrix $B''' = (B'_1+c B''_2, \ldots, B'_N + c B''_2, c B''_1)$. 
	Let $E_1,\ldots, E_{N+1}$ denote the $B'''$-slicing of $A$. 
	We have 
	$\langle A_x, B'''_{N+1} - B'''_i\rangle = \langle A_x, c B''_1 - (B'_i + c B''_2) \rangle = c \langle A_x, B''_1-B''_2 \rangle - \langle A_x, B'_i \rangle$ for $i\leq N$. 
	Choosing $c>0$ large enough, this is positive iff $x\in D_1$. 
	This means that $E_{N+1}=D_1$. 
	
	For any $i\leq N$,  we have 
	$\langle A_x, B'''_{i} - B'''_j\rangle = \langle A_x, (B'_i + cB''_2) - (B'_j + cB''_2)\rangle = \langle A_x, B'_i - B'_j \rangle$ for all $j\leq N$, 
	and $\langle A_x, B'''_i - B'''_{N+1}\rangle =\langle A_x, (B'_i + c B''_2) - cB''_1\rangle =\langle A_x, B'_i\rangle + c \langle A_x, B''_2 - B''_1\rangle$. 
	Choosing $c>0$ large enough, all of these expressions are positive iff $x\in C_i \cap D_2$. 
	This means that $E_i=D_2\cap C_i$ for all $i\leq N$. 
\end{proof}

\begin{theorem}
	\label{theorem:indhidintvis}
	Let $\Xcal=\Xcal_1\times\cdots\times\Xcal_n$ 
	and let $A$ have row span $V_k(\Xcal)$ for some $1\leq k\leq n$. 
	Let $\Ycal=\Ycal_1\times\cdots\times\Ycal_m$ and let $B$ have row span $V_1(\Ycal)$. 
	\begin{itemize}
		\item 
		If $\Xcal$ contains $m$ disjoint Hamming balls $K^1,\ldots, K^m$, whereby $K^j$ contains $|\Ycal_j|-1$ disjoint radius-$k$ Hamming balls for $j=1,\ldots, m$, 
		and $\rank(A_{\Xcal\setminus(\cup_j K^j)}) = \rank(A)$, then 
		\begin{equation*}
		\dim(\TMcal_{A\otimes B}) = \Big(1+\sum_{\lambda\in \Lambda_k\setminus\emptyset} \prod_{i\in\lambda}(|\Xcal_i|-1) \Big) \Big(1+\sum_{j\in[m]}(|\Ycal_j|-1) \Big)-1.
		\end{equation*}
		\item 
		If $\Xcal$ can be covered by Hamming balls $K^1,\ldots, K^m$, such that $K^j$ can be covered by $|\Ycal_j|-1$ radius-$k$ Hamming balls, then 
		\begin{equation*}
		\dim(\TMcal_{A\otimes B}) = |\Xcal|-1.
		\end{equation*}
	\end{itemize}
\end{theorem}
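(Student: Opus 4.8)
First I would exploit the product structure of the hidden family. Because $B$ has row span $V_1(\Ycal)$, Lemma~\ref{tropicalRBM} lets me treat the $m$ hidden units separately: for generic single-valued $\theta$ the slicing of $A$ splits, for each unit $j$, into a $B^j$-slicing $\{C^j_{y_j}\}_{y_j\in\Ycal_j}$ of $\Xcal$ (the block-diagonal structure of Equation~\ref{proposition:rank-Hadamard-disjoint}), and these $m$ slicings may be chosen independently. Taking $B$ to be the standard sufficient-statistics matrix of the independence model---permissible by Proposition~\ref{proposition:observations1}, which leaves $\Acal_\theta$ and its rank unchanged---the rows of $\Acal_\theta=(A_x\otimes B_{h_\theta(x)})_x$ group according to the coordinates of $V_1(\Ycal)$: the constant coordinate yields a row block equal to $A$, while each coordinate $(j,s)$ with $s\in\Ycal_j\setminus\{0\}$ yields a row block equal to $A$ with every column outside $C^j_s$ set to zero. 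The task is therefore to choose the $m$ slicings so that these $1+\sum_j(|\Ycal_j|-1)$ row blocks carry the required rank and are linearly independent.

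Next I would construct the $m$ slicings from the hypothesised Hamming balls by the truncated-slicing device. Fix $j$. The number of interactions in $\Lambda_k$ that distinguish $x$ from a fixed centre is strictly monotone in the Hamming distance to that centre, so the separation behind Lemma~\ref{lemma:asd} and Theorem~\ref{theorem:mixtureskinteraction} applies: I obtain an $I_{|\Ycal_j|-1}$-slicing adapted to the $|\Ycal_j|-1$ radius-$k$ balls associated with $K^j$---for the first statement the pieces each contain one such ball (Lemma~\ref{lemma:asd}), and for the second statement each piece is contained in one such ball and the pieces cover $K^j$ (nearest-centre assignment). The same monotonicity shows that the partition of $\Xcal$ into $K^j$ and its complement is an $I_2$-slicing. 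Feeding these into Lemma~\ref{lemma:truncatedslicing} produces an $I_{|\Ycal_j|}$-slicing whose baseline piece is $\Xcal\setminus K^j$ and whose remaining active pieces $C^j_s$ lie inside $K^j$; in the first case each $C^j_s$ still contains a radius-$k$ ball, so $\rank(A_{C^j_s})=\rank(A)$ by Lemma~\ref{lemma:fullranksubmatrix}, and in the second case each $C^j_s$ is contained in a radius-$k$ ball, so $A_{C^j_s}$ has linearly independent columns, and the active pieces cover $K^j$.

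Finally I would run the rank count. For the first statement the active pieces are confined to the disjoint balls $K^j$, hence pairwise disjoint across all $(j,s)$, and their common complement contains $\Xcal\setminus\bigcup_j K^j$, on which $A$ has full rank by hypothesis. Evaluating a hypothetical linear relation among the row blocks first on this complement forces the constant block to vanish---the restriction of the row space of $A$ to a full-rank column set is injective, since equal dimensions make the column space of $A_{C}$ coincide with that of $A$---and then evaluating on each active piece in turn forces the corresponding active block to vanish; thus the $1+\sum_j(|\Ycal_j|-1)$ blocks are independent and $\rank(\Acal_\theta)=\rank(A)\cdot\rank(B)$, which gives the stated value after subtracting one. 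For the second statement the active pieces cover $\Xcal$ and each active block has independent columns, so reading any column relation $\sum_x\gamma_x(A_x\otimes B_{h(x)})=0$ off the block indexed by an active piece containing a given $x$ forces $\gamma_x=0$; hence the columns of $\Acal_\theta$ are independent and $\rank(\Acal_\theta)=|\Xcal|$.

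In both cases these values already meet the available upper bounds: $\dim(\TMcal_{A\otimes B})\le\rank(A\otimes B)-1=\rank(A)\rank(B)-1$ since every column of $\Acal_\theta$ lies in the column space of $A\otimes B$ (Proposition~\ref{proposition:rankJacobian}), respectively $\dim(\TMcal_{A\otimes B})\le|\Xcal|-1$ because $\TMcal_{A\otimes B}\subseteq\R^\Xcal/\R\mathds{1}$; so the inequalities are equalities. The main obstacle is the simultaneous demand in the construction step that the active pieces be full rank \emph{and} confined to the disjoint $K^j$: it is exactly this confinement, together with the full-rank-complement hypothesis, that makes the independence argument of the last step go through.
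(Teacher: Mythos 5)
Your proposal is correct and follows essentially the same route as the paper's own proof: the block decomposition of Lemma~\ref{tropicalRBM} and Equation~\ref{proposition:rank-Hadamard-disjoint}, truncated slicings via Lemma~\ref{lemma:truncatedslicing} with $D_2=K^j$ combined with the ball-center slicings of Lemma~\ref{lemma:asd}, and full-rank blocks from Lemma~\ref{lemma:fullranksubmatrix}. The only difference is expository: you spell out the rank-additivity/linear-independence count and the matching upper bounds, which the paper leaves implicit.
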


\begin{proof}
	Have Equation~\ref{proposition:rank-Hadamard-disjoint} in mind. 
	Consider some $j\in[m]$. 
	We use Lemma~\ref{lemma:truncatedslicing} with $N=|\Ycal_j|-1$. 
	Let $D_2=K^j$ and let $C_1,\ldots, C_N$ be the slicing of $A$ by the matrix $(A_{c^j_{i}})_{i\in[N]}$, where $c^j_i$ are the centers of $|\Ycal_j|-1$ disjoint radius-$k$ Hamming balls in $K^j$. 
	This shows that there is a $B^j$-slicing of $A$ with blocks $C^j_1, \ldots, C^j_{|\Ycal_j|-1}, C^j_{|\Ycal_j|}$, where $C^j_{i}$ is contained in $K^j$ and contains a radius-$k$ Hamming ball, for all $1\leq i\leq |\Ycal_j|-1$, and $C^j_{|\Ycal_j|} = \Xcal\setminus K^j$. 
	
	Since all $K^j$ are disjoint, we have that all $C^j_i$, $1\leq i\leq |\Ycal_j|-1$, $j\in[m]$, are disjoint. 
	Since each of these sets contains a radius-$k$ Hamming ball, Lemma~\ref{lemma:fullranksubmatrix} implies $\rank(A_{C^j_{i}}) = \rank(A)$, for all $1\leq i\leq |\Ycal_j|-1$, $j\in[m]$. 
	The remaining columns of the tropical morphism have rank at least $\rank(A)$, since it is assumed that $\rank(A_{\Xcal\setminus(\cup_j K^j)})=\rank(A)$. 
	Hence we have $\rank(\Acal_\theta) = \sum_{j\in [m]} (|\Ycal_j|-1)\rank(A)  + \rank(A)$. This proves the first item. 
	The second item follows from similar arguments as the second item of Theorem~\ref{theorem:mixtureskinteraction}. 
\end{proof}

We note the following special case, where both $\Ecal_B$ and $\Ecal_A$ are independence models, which is known as a restricted Boltzmann machine. 

\begin{corollary}
	\label{corollary:TRBM}
	Let $\Xcal=\Xcal_1\times\cdots\times\Xcal_n$ 
	and let $A$ have row span $V_1(\Xcal)$. 
	Let $\Ycal=\Ycal_1\times\cdots\times\Ycal_m$ and let $B$ have row span $V_1(\Ycal)$. 
	\begin{itemize}
		\item 
		If $\Xcal$ contains $m$ disjoint Hamming balls $K^1,\ldots, K^m$, 
		whereby $K^j$ contains $|\Ycal_j|-1$ disjoint radius-one Hamming balls for $j=1,\ldots, m$, 
		and $\rank(A_{\Xcal\setminus(\cup_j K^j)}) = \rank(A)$, then 
		\begin{equation*}
		\dim(\TMcal_{A\otimes B}) = \Big(1+\sum_{i\in[n]}(|\Xcal_i|-1) \Big) \Big(1+\sum_{j\in[m]}(|\Ycal_j|-1) \Big)-1.
		\end{equation*}
		\item 
		If $\Xcal$ can be covered by Hamming balls $K^1,\ldots, K^m$, such that $K^j$ can be covered by $|\Ycal_j|-1$ radius-$k$ Hamming balls, then 
		\begin{equation*}
		\dim(\TMcal_{A\otimes B}) = |\Xcal|-1.
		\end{equation*}
	\end{itemize}
\end{corollary}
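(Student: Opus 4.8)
The plan is to obtain Corollary~\ref{corollary:TRBM} as the specialization of Theorem~\ref{theorem:indhidintvis} to the case $k=1$, so that no genuinely new argument is required. First I would observe that the independence-model hypothesis that $A$ have row span $V_1(\Xcal)$ is exactly the hypothesis of Theorem~\ref{theorem:indhidintvis} that $A$ have row span $V_k(\Xcal)$, evaluated at $k=1$, while the hypothesis on $B$ (row span $V_1(\Ycal)$) is identical in the two statements. Likewise, for $k=1$ the radius-$k$ Hamming balls appearing in the hypotheses of Theorem~\ref{theorem:indhidintvis} are precisely radius-one Hamming balls, so the two combinatorial conditions of the corollary—containment of disjoint balls in the first item and a covering in the second—coincide verbatim with the $k=1$ instances of the corresponding conditions in Theorem~\ref{theorem:indhidintvis}.

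With the hypotheses matched, I would simply invoke Theorem~\ref{theorem:indhidintvis} to conclude both dimension formulas, the only remaining task being to simplify the visible-factor contribution. Since $\Lambda_1\setminus\emptyset = \{\{i\}\colon i\in[n]\}$ consists only of singletons, and for a singleton $\lambda=\{i\}$ the product $\prod_{i'\in\lambda}(|\Xcal_{i'}|-1)$ reduces to the single factor $|\Xcal_i|-1$, we have $\sum_{\lambda\in\Lambda_1\setminus\emptyset}\prod_{i'\in\lambda}(|\Xcal_{i'}|-1) = \sum_{i\in[n]}(|\Xcal_i|-1)$. Substituting this into the first formula of Theorem~\ref{theorem:indhidintvis} yields $\dim(\TMcal_{A\otimes B}) = (1+\sum_{i\in[n]}(|\Xcal_i|-1))(1+\sum_{j\in[m]}(|\Ycal_j|-1))-1$, as claimed, and the second formula $|\Xcal|-1$ is unchanged under specialization.

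There is essentially no obstacle here: the content of the corollary is entirely contained in Theorem~\ref{theorem:indhidintvis}, and the only points requiring care are the routine verification that the ball-radius and row-span hypotheses line up at $k=1$ and the elementary simplification of the product over singletons. One could additionally remark that, when all $|\Xcal_i|=|\Ycal_j|=2$, the resulting object is the tropical binary restricted Boltzmann machine, which motivates the dedicated treatment in Section~\ref{section:binRBM}; but this observation is purely contextual and is not needed for the proof itself.
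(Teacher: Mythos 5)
Your proposal is correct and takes exactly the paper's route: the paper offers no separate argument for Corollary~\ref{corollary:TRBM}, presenting it as the immediate specialization of Theorem~\ref{theorem:indhidintvis} to $k=1$, with the same simplification $\sum_{\lambda\in\Lambda_1\setminus\emptyset}\prod_{i\in\lambda}(|\Xcal_i|-1)=\sum_{i\in[n]}(|\Xcal_i|-1)$ that you perform. The only point worth flagging is that the ``radius-$k$'' in the corollary's second item is evidently a leftover from the theorem's statement and should read ``radius-one,'' which your specialization handles implicitly.
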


A weaker version of Corollary~\ref{corollary:TRBM} was obtained previously in~\cite{montufar2013discrete}. 
That result was based on slicings by parallel hyperplanes, which are less efficient than our construction with truncated slicings. 
One should note, however, that in order to realize slicings by parallel hyperplanes it is not required that each $B^j$ has row space $\R^{\Ycal_j}$, but only that $B^j$ can be projected into an arbitrary set of collinear points. 
The special case of Corollary~\ref{corollary:TRBM} where all variables are binary, $\Xcal=\{0,1\}^n$, $\Ycal=\{0,1\}^m$, was obtained previously in~\cite{Cueto2010}. 
That case is not improved by the present analysis, since for binary variables the truncated slicings are just slicings by hyperplanes.

\section{Interacting Hidden Variables}
\label{section:harmonium}

Consider a matrix $B$ of rank $b$ in reduced row echelon form. 
In this case the tropical morphism has the form 
\newcommand{\pA}{\phantom{\!\!\!A_{C_{p_1}}}}
\begin{equation}
\Acal_\theta =
\left(\begin{array}{c}
\sm{\boxed{A_{C_{p_1}}}& \cdots & \boxed{\pA} &									&\boxed{\pA}& \cdots & \boxed{\pA} &			  & 							 &\boxed{\pA} &\cdots  &\boxed{\pA}\\ 
	& 			& 					  &\boxed{A_{C_{p_2}}}  &\boxed{\pA}& \cdots & \boxed{\pA} &			  & 							 &\boxed{\pA} &\cdots  &\boxed{\pA}\\ 
	&			&					  &								    &					&			  & 					& \ddots  &  						      &\vdots 		  & 		    &\vdots\\ 
	& 			&					  & 			   					&					&			  & 			  		& 			   &\boxed{A_{C_{p_{b}}}}&\boxed{\pA} &\cdots &\boxed{\pA}}
\end{array}
\right).  
\label{equation:rref}
\end{equation}
From this we see that $\rank(\Acal_\theta) \geq \sum_{r=1}^{\rank(B)} \rank(A_{C_{p_r}})$. 
Rearranging the columns of $B$ suitably, 
any subset $P \subseteq\Ycal$ with $|P|= \rank(B_{P}) = \rank(B)$ can be obtained as the set of pivots $p_1,\ldots, p_{\rank(B)}$ of the reduced row echelon from. 
For instance, Lemma~\ref{lemma:fullranksubmatrix} shows that, if $\Ycal=\Ycal_1\times\cdots\times\Ycal_m$ and $B$ has row span $V_{\Lambda'}(\Ycal)$, 
then $|P| = \rank(B_{P}) =\rank(B)$ whenever $P$ is a $\Lambda'$-ball in $\Ycal$. 
However, it may be difficult or even impossible to find a $B$-slicing of $A$ such that $\rank(A_{C_{p_r}}) =\rank(A)$, for all $1\leq r\leq \rank(B)$. 
Nevertheless, 
in order to show that the tropical model $\TMcal_{A\otimes B}$ has dimension equal to $\rank(A)\cdot \rank(B) -1$, 
it suffices to show that $\rank(A_{C_{p_r,1}}, \ldots, A_{C_{p_r,s_r}}) = \rank(A)$ for all $1\leq r\leq \rank(B)$, where $(p_r,1),\ldots, (p_r,s_r)$ index the columns of (the reduced row echelon form of) $B$ with a non-zero entry at the $r$-th position and zeros in all the next entries. 

The following theorem addresses the tropical dimension of a Kronecker product model with an arbitrary hierarchical model $\Ecal_B$ and a $k$-interaction model $\Ecal_A$. This result includes Theorems~\ref{theorem:mixtureskinteraction} and~\ref{theorem:indhidintvis} as special cases. In fact it relaxes the hypothesis of Theorem~\ref{theorem:indhidintvis}.

\begin{theorem}
	\label{theorem:generalKronecker}
	Let $\Xcal=\Xcal_1\times \cdots\times\Xcal_n$ 
	and let $A$ have row span $V_k(\Xcal)$.  
	Let $\Ycal=\Ycal_1\times\cdots\times \Ycal_m$ and let $B$ have row span $V_{\Lambda'}(\Ycal)$. 
	\begin{itemize}
		\item If $\Xcal$ contains $\rank(B)$ disjoint radius-$k$ Hamming balls, then 
		\begin{equation*}
		\dim(\TMcal_{A\otimes B}) = \rank(B)\cdot\rank(A) -1. 
		\end{equation*}
		\item If $\Xcal$ can be covered by $\rank(B)$ disjoint radius-$k$ Hamming balls, then 
		\begin{equation*}
		\dim(\TMcal_{A\otimes B}) = |\Xcal| -1. 
		\end{equation*}
	\end{itemize} 
\end{theorem}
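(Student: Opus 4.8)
The plan is to work with $B$ in reduced row echelon form, so that the tropical morphism $\Acal_\theta$ acquires the block structure of Equation~\ref{equation:rref}, and then to invoke the refinement recorded just before the theorem: to certify that $\TMcal_{A\otimes B}$ attains dimension $\rank(B)\cdot\rank(A)-1$ it suffices to exhibit a single $B$-slicing $\{C_y\}_{y\in\Ycal}$ for which, for every pivot position $1\le r\le \rank(B)$, the horizontal concatenation $A_{\bigcup_{y\in P_r}C_y}$ has rank $\rank(A)$, where $P_r$ is the set of columns of the echelon form whose last nonzero entry sits in row $r$ (so that the pivot $p_r\in P_r$). Since $A$ has row span $V_k(\Xcal)$, Lemma~\ref{lemma:fullranksubmatrix} reduces this further: it is enough that $\bigcup_{y\in P_r}C_y$ contain a radius-$k$ Hamming ball. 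Because the hypothesis furnishes exactly $\rank(B)$ disjoint such balls $K(c_1,\Lambda_k),\dots,K(c_{\rank(B)},\Lambda_k)$, the whole first item comes down to producing a slicing that distributes these $\rank(B)$ balls, one into each of the $\rank(B)$ column-blocks $P_r$, with no ball split between two blocks.

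The first ingredient is that the landmark vectors are free to choose. Writing $w_r:=\Theta^\top B_{p_r}$, the pivot columns $B_{p_1},\dots,B_{p_{\rank(B)}}$ form a basis of the column space of $B$, so the system $\Theta^\top B_P=(w_1,\dots,w_{\rank(B)})$ is solvable for any prescribed $w_r$, while the images of the non-pivot columns are then forced to be the combinations $\Theta^\top B_y=\sum_{r'} B_{r',y}\,w_{r'}$. Mimicking Lemma~\ref{lemma:asd}, I would represent $A$ by its $\pm1$ matrix, so that inner products read off Hamming distances, and set $w_r:=\alpha_r A_{c_r}$ with magnitudes $\alpha_r$ to be fixed, possibly together with a per-column bias coming from the constant row of $A$. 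For a point $x\in K(c_r,\Lambda_k)$ the pivot $p_r$ then scores $\langle w_r,A_x\rangle$, which is large and positive, whereas an arbitrary column $y$ scores $\sum_{r'} B_{r',y}\langle w_{r'},A_x\rangle$, in which only the term $r'=r$ is large.

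The main obstacle is precisely the control of these competing scores: one must guarantee that for every $x\in K(c_r,\Lambda_k)$ the maximizing column $h_\theta(x)$ again lies in $P_r$, and not in some $P_s$ with $s\ne r$. This is delicate because the echelon entries $B_{r,y}$ are unconstrained in sign and magnitude, so a non-pivot column with $B_{r,y}>1$, or with a sign-reinforcing contribution from a far ball, could outscore the pivot. Two observations should make this tractable. First, I do not need ball $r$ to reach block $P_r$ specifically --- it is enough that each ball be captured \emph{as a whole} by \emph{some} block and that distinct balls go to distinct blocks, which relaxes the target to an injection of the $\rank(B)$ balls into the $\rank(B)$ blocks. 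Second, the balls are disjoint, so for $x\in K(c_r,\Lambda_k)$ and $r'\ne r$ the inner product $\langle A_{c_{r'}},A_x\rangle$ is separated (in Hamming terms) from the self-term $\langle A_{c_r},A_x\rangle$. Choosing the magnitudes $\alpha_r$ with a large separation of scales and tuning the biases should let the $r'=r$ term dominate the score inside $K(c_r,\Lambda_k)$, forcing the winning column to have its last nonzero entry in the correct row; where a direct choice is awkward I would instead build the slicing incrementally, peeling off one ball at a time with the truncated-slicing construction of Lemma~\ref{lemma:truncatedslicing}, which already handled the interference between a coarse two-way cut and a finer refinement in the $V_1(\Ycal)$ case of Theorem~\ref{theorem:indhidintvis}.

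With such a slicing in hand, the lower bound $\rank(\Acal_\theta)\ge \rank(B)\cdot\rank(A)$ follows from the refinement via the block-triangularity of Equation~\ref{equation:rref}, the matching upper bound $\dim(\TMcal_{A\otimes B})\le\rank(A)\cdot\rank(B)-1$ being immediate since $\rank(\Acal_\theta)\le\rank(A\otimes B_{h_\theta})=\rank(A)\cdot\rank(B_{h_\theta})\le\rank(A)\cdot\rank(B)$; this settles the first item. For the second item, when the $\rank(B)$ disjoint balls cover $\Xcal$ I route each ball wholly into a distinct block exactly as above, but now every $C_y$ sits inside a single ball, so $A_{C_y}$ has linearly independent columns and the block-triangular structure makes all $|\Xcal|$ columns of $\Acal_\theta$ independent, giving $\rank(\Acal_\theta)=|\Xcal|$ and hence $\dim(\TMcal_{A\otimes B})=|\Xcal|-1$, in parallel with the second item of Theorem~\ref{theorem:mixtureskinteraction}.
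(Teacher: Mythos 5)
Your architecture coincides with the paper's: group the columns of $B$ by their last nonzero entry, aim landmark vectors $\Theta^\top_r \propto (\ast\,;A_{c_r})$ at the centers of the disjoint balls, and finish with Lemma~\ref{lemma:fullranksubmatrix} and the block-triangular structure of Equation~\ref{equation:rref}. However, the step you yourself call ``the main obstacle'' is exactly where the proof lives, and neither of your remedies closes it. Working with the reduced row echelon form, a non-pivot column $y\in P_r$ carries an arbitrary real coefficient $B_{r,y}$ on the landmark $w_r$ and arbitrary coefficients on earlier landmarks. If $B_{r,y}<0$, then for every $x$ \emph{outside} ball $r$ the term $B_{r,y}\langle w_r,(1;A_x)\rangle$ is large and \emph{positive}, of order $|B_{r,y}|\alpha_r$; so under any fixed hierarchy of scales such a rogue column outscores the intended block for every ball whose own scale is smaller, and nothing prevents two different balls from being swallowed by the same block. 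Your injection relaxation is thus not self-fulfilling --- it is precisely the statement that needs proof --- and tuning biases cannot help, since the bias sits inside $w_r$ and is multiplied by the same negative coefficient. The fallback to Lemma~\ref{lemma:truncatedslicing} also fails: that lemma composes identity-matrix slicings, and an $I_{N+1}$-slicing need not be realizable as a $B$-slicing for a general hierarchical $B$; it was usable in Theorem~\ref{theorem:indhidintvis} only because there the hidden model factored into independent variables whose factors have full row span $\R^{\Ycal_j}$, so each factor's slicing genuinely was an identity slicing (Lemma~\ref{tropicalRBM}).

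The missing idea is a change of basis. By Proposition~\ref{proposition:observations1} the tropical model depends only on the row span of $B$, so the paper replaces the echelon form by the $0/1$ sufficient statistics matrix of Equation~\ref{eq:suffstatinteraction} for $V_{\Lambda'}(\Ycal)$ (as in Proposition~\ref{lemma:row-space}). In that basis every column of $B$ has coefficient exactly $1$ on its leading row and coefficients in $\{0,1\}$ on earlier rows; hence with $\Theta^\top_j=\kappa_j(2^{k+1}-a;A_{c_j})$ and $\kappa_j\gg\sum_{j'<j}\|\Theta^\top_{j'}\|_1$, every column $y$ in block $j$ satisfies $\Theta^\top B_y=\kappa_j\bigl((2^{k+1}-a;A_{c_j})+V_y\bigr)$ with $\|V_y\|_1\ll1$, so its score at $x$ equals $\kappa_j$ times a quantity whose sign is positive if and only if $d_H(x,c_j)\le k$, by the $\pm1$ Hamming-distance computation you already borrowed from Lemma~\ref{lemma:asd}. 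Consequently, for $x$ in ball $j$ all block-$j$ columns score positive and all columns of every other block score negative (the balls being disjoint), which yields the routing $\bigcup_{y\colon l(y)=j}C_y\supseteq K(c_j,\Lambda_k)$ in one line. With that step repaired, the remainder of your argument --- the rank count for the first item and the linear-independence count for the second --- goes through as you wrote it.
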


\begin{proof}
	The first item is as follows. 
	Let $\Ycal=\Ycal_1\times\cdots\times\Ycal_m$ and let $B$ be the matrix defined in Proposition~\ref{lemma:row-space} with row span $V_\Lambda(\Ycal)$. 
	We can group the columns according to their largest non-zero entry. 
	This yields one block of columns for each possible $(\lambda, \tilde y_\lambda)$. 
	The columns in the block $(\lambda, \tilde y_\lambda)$ have a $1$ in the $(\lambda, \tilde y_\lambda)$-th entry and zeros all the next entries. 
	
	Let $\Xcal=\Xcal_1\times\cdots\times\Xcal_n$ and let $A$ be the matrix with row span $V_\Lambda(\Xcal)$, with entries $A_{(\lambda, \tilde x_\lambda), x}$ equal to $1$ if $x_\lambda = \tilde x_\lambda$ and $-1$ otherwise, for $\lambda\in\Lambda$, $\tilde x_\lambda\in\Xcal_\lambda$, and $x\in\Xcal$. 
	We consider the sufficient statistics matrix given by $(\mathds{1};A)\in \R^{a \times \Xcal}$, where $a=1 + \sum_{\lambda\in\Lambda\setminus\emptyset}|\Xcal_\lambda|$. 
	
	Let $c_1,\ldots, c_b\in \Xcal$ be the centers of $b$ disjoint $\Lambda$-balls in $\Xcal$. 
	Consider a parameter matrix $\Theta^\top\in\R^{a\times b}$ with columns $\Theta^\top_j = \kappa_j ( 2^{k+1} -a ; A_{c_j})$, $j=1,\ldots, b$, where $\kappa_j \gg \sum_{j' < j} \|\Theta^\top_{j'}\|_1$. 
	Then $\Theta^\top B_y = \kappa_j ( (2^{k+1} -a ; A_{c_j}) + V_y )$ for all $y$ in the $j$-th block of columns of $B$, where $V_y\in\R^a$ is some vector with $\|V_y\|_1\ll 1$.  
	In this case, we have that $\langle (1; A_x), \Theta^\top B_y\rangle = \kappa_j ( 2^{k+1} - a + \langle A_x, A_{c_y} \rangle + \epsilon)$. 
	This is positive if $\langle A_x, A_{c_j}\rangle > a-2^{k+1}$ and negative if $\langle A_x, A_{c_j}\rangle < a -2^{k+1}$.  
	Now, note that if $d_H(x, x')\leq k$, then $\langle A_x, A_{x'}\rangle \geq (a - 1) - 2(2^{k}-1) = a - 2^{k+1} + 1$, and if $d_H(x,x')>k$, then $\langle A_x, A_{x'}\rangle \leq (a-1) - 2(2^k) = a - 2^{k+1} -1$.

	In turn, we have that $\cup_{y\colon l(y) =j } C_{y} \supseteq K(c_j, \Lambda_k)$, for all $1\leq j\leq b$, 
	where $l(y)$ denotes the largest non-zero entry of $B_y$.  
	Lemma~\ref{lemma:fullranksubmatrix} then yields the claim. 
	
	The second item is as follows. 
	In this case $\cup_{y\colon l(y)=j}C_y\subseteq K(c_j, \Lambda_k)$, for all $1\leq j\leq b$. 
	Since $C_y \cap C_{y'} =\emptyset$ for all $y\neq y'$, the matrix $(A_{C_y})_{y\colon l(y) = j}$ has linearly independent columns, for all $1\leq j\leq b$. 
\end{proof}

\section{Binary Restricted Boltzmann Machines are Not Defective}
\label{section:binRBM}

In~\cite{Cueto2010} it was shown that the restricted Boltzmann machine with $n$ visible and $m$ hidden binary units has the expected dimension $\min\{2^n -1, (n+1)(m+1)-1 \}$ whenever $\{0,1\}^n$ contains $m+1$ disjoint radius-one Hamming balls, $m+1\leq \Amd(n,3)$, or when $\{0,1\}^n$ can be covered by $m+1$ radius-one Hamming balls, $m+1\geq \Kmd(n,1)$. 
This also follows from Theorems~\ref{theorem:indhidintvis} and~\ref{theorem:generalKronecker} as the special case where both $\Ecal_A$ and $\Ecal_B$ are independence models with binary variables. 
This leaves open the cases where $\Amd(n,3)< m+1 <\Kmd(n,1)$. 
An additional problem is that in general the functions $\Amd$ and $\Kmd$ can only be bounded but not evaluated exactly. 
In~\cite{Cueto2010} it was conjectured that the restricted Boltzmann machine always has the expected dimension. 
In this section we resolve that conjecture affirmatively. 

Our strategy is to bound the dimension of the RBM from below by the dimension of a mixture model. 
By the results from~\cite{Catalisano2011}, mixtures of binary independence models are not defective whenever the number of visible variables is at least $5$. 
In general RBMs do not contain mixture models of independence models with the same number of parameters, 
as was shown in~\cite{montufar2012does}. 
However, it is possible to relate the dimension of the two models. 
Here we consider the dimension of the actual model, not of its tropical version. 

The following lemma lower bounds the dimension of a Kronecker product model with independent binary hidden variables by the dimension of a mixture model with the same number of parameters. 

\begin{lemma}
	\label{theorem:mixthad}
	Let $B$ have row span $V_1( \{0,1\}^m )$ and let $\tilde B$ have row span $\R^{m+1}$. 
	Then $\dim(\Mcal_{A\otimes B}) \geq \dim(\Mcal_{A\otimes \tilde B})$. 
\end{lemma}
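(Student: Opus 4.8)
The plan is to realize the $(m+1)$-mixture $\Mcal_{A\otimes\tilde B}$ as a degeneration of the restricted Boltzmann machine $\Mcal_{A\otimes B}$, and to transfer the dimension bound across the limit using lower semicontinuity of the rank of the Jacobian. First I would put both models in canonical form. Since $\tilde B$ has row span $\R^{m+1}$ and $A$ has a constant row, Proposition~\ref{proposition:base} identifies $\Mcal_{A\otimes\tilde B}$ with the $(m+1)$-mixture of $\Ecal_A$; taking $\tilde B=I_{m+1}$ this is the set of densities $\tilde q\propto\sum_{i=0}^m\exp(v_i)$ with $v_0,\dots,v_m$ in the row span $W$ of $A$. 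On the other side, the binary independence structure of $B$ (row span $V_1(\{0,1\}^m)$) together with Equation~\ref{equation:conditionaldistributions} puts the RBM marginal in product form $p\propto\exp(u_0)\prod_{j=1}^m(1+\exp(u_j))$ with $u_0,\dots,u_m\in W$. Expanding the product exhibits the RBM as a constrained $2^m$-component mixture whose subsets of size $\le1$ contribute exactly $\exp(u_0)$ and $\exp(u_0+u_j)$; suppressing the higher-order components should recover a free $(m+1)$-mixture, and the map $(u_0,\dots,u_m)\mapsto(u_0,u_0+u_1,\dots,u_0+u_m)$ is a linear bijection of $W^{m+1}$, so nothing is lost in this correspondence.

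To make this precise I would work with unnormalized densities and the reformulation $\dim\Mcal=\max_\theta\big(\dim(\R\tilde p_\theta+\operatorname{im}d\tilde p_\theta)-1\big)$, which follows from $\partial_\theta p$ being the projection of $\partial_\theta\tilde p$ modulo $\tilde p$ and which avoids tracking the mean-centering of scores. Introduce the family $\tilde p_s(x)=\exp(v_0(x))\prod_{j=1}^m\big(1+s\exp(v_j(x)-v_0(x))\big)$. For each fixed $s>0$ this is an affine reparametrization of the RBM (set $u_0=v_0$, $u_j=v_j-v_0+\log s$), so the image of $v\mapsto\tilde p_s(v)$ is exactly $\Mcal_{A\otimes B}$ and its projective dimension equals $\dim\Mcal_{A\otimes B}$. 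Next I would assemble the matrix $\mathcal{J}_s$ whose columns are $\tilde p_s$, the derivatives $\partial_{v_{0,\alpha}}\tilde p_s$, and the rescaled derivatives $s^{-1}\partial_{v_{j,\alpha}}\tilde p_s$ for $j\ge1$. Rescaling by the nonzero factor $s^{-1}$ leaves the column span unchanged, so $\rank\mathcal{J}_s=\dim(\R\tilde p_s+\operatorname{im}d_v\tilde p_s)$ and hence $\dim\Mcal_{A\otimes B}\ge\rank\mathcal{J}_s-1$ for every $s>0$.

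Finally I would identify the limit. As $\mathcal{J}_s$ depends polynomially on $s$, its columns extend continuously to $s=0$, where they become $\exp(v_0)$, $A_\alpha\exp(v_0)$, and $A_\alpha\exp(v_j)$ for $j\ge1$; these span $U:=\operatorname{span}\{A_\alpha\exp(v_j):0\le j\le m\}$, so $\rank\mathcal{J}_0=\dim U$. Lower semicontinuity of rank then gives $\rank\mathcal{J}_s\ge\dim U$ for small $s>0$, whence $\dim\Mcal_{A\otimes B}\ge\dim U-1$. Applying the same projective formula to $\tilde q=\sum_{i=0}^m\exp(v_i)$, where $\operatorname{im}d_v\tilde q=U$ and $\tilde q\in U$, shows the mixture has dimension exactly $\dim U-1$ at this $v$, so choosing $v$ generic (where the mixture attains its maximal dimension $\dim\Mcal_{A\otimes\tilde B}$) closes the argument. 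The main obstacle is the bookkeeping at $s\to0$: the RBM distribution collapses to the single member $\exp(v_0)/Z$ and the $j\ge1$ score directions vanish, so the bound is invisible without the $s^{-1}$ rescaling; the crux is to verify that this rescaled limit still captures all $m+1$ independent directions, i.e.\ that $\rank\mathcal{J}_0=\dim U$, and that passing to unnormalized coordinates legitimately removes the otherwise delicate discrepancy between centering the RBM limit against $\exp(v_0)$ and centering the mixture against $\tilde q$.
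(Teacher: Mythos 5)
Your proof is correct and is essentially the paper's own argument in different clothing: with $s=e^{-\gamma}$ your reparametrization $u_j=v_j-v_0+\log s$ is exactly the paper's parameter shift $\Theta_{j,i}=\tilde\Theta_{j,i}-\tilde\Theta_{0,i}$ (with an extra offset $-\gamma$ on the constant coordinate), your $s^{-1}$ column rescaling is the paper's diagonal factor $\exp(\gamma)$, and both arguments conclude by lower semicontinuity of rank as the RBM degenerates onto the mixture. Your packaging via unnormalized densities and a family depending polynomially on $s$ (with exact evaluation at $s=0$) is a slightly cleaner bookkeeping of the same degeneration than the paper's $\epsilon$-approximation of the conditional expectation vectors.
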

\begin{proof}
	Let $B$ be given by $B_y = (1; y_1; \ldots; y_m)$, $y\in\{0,1\}^m$, and let $\tilde B = I_{m+1}$ be the $(m+1)\times (m+1)$ identity matrix. 
	For any $j\in[m]$ let $e_j\in\{0,1\}^m$ denote the vector with a single one at the $j$-th position and zeros elsewhere. 
	Any $y\in\{0,1\}^m$ can be written as $y = \sum_{j\in[m] \colon e_j \leq y } e_j$. 
	Here $ y'\leq y$ if and only if $y'_j\leq y_j$ for all $j\in[m]$. 
	As discussed in Equation~\ref{eq:rankJ}, the Jacobian of the natural parametrization of a marginal model $\Mcal_{F}$ has rank $\rank(J_{\Mcal_{F}}(\theta)) = \rank \left(\sum_y p_\theta(y|x) F(x,y) \right)_x -1$. 
	We have 
	\begin{align*}
	\sum_{y\in\{0,1\}^m} p_\theta(y|x) (A_x \otimes B_y)
	=& A_x\otimes \sum_{y\in\{0,1\}^m} p_\theta(y|x) B_y  \\
	=& A_x \otimes \Big(  1; \sum_{y\in\{0,1\}^m} p_\theta(y|x) y \Big)\\
	=& A_x \otimes \Big(  1; \sum_{y\in\{0,1\}^m} p_\theta(y|x)  \sum_{j\in[m]\colon e_j\leq y } e_j \Big)\\
	=& A_x \otimes \Big(  1; \sum_{j\in[m]}  \sum_{y\in\{0,1\}^m\colon y\geq e_j} p_\theta(y|x) e_j \Big)\\
	=& A_x \otimes \Big(  1; \sum_{j\in[m]}  p_\theta(y_j=1 |x) e_j \Big) . 
	\end{align*}
	The conditional distributions are given by  
	\begin{align*}
	p_\theta(y_j =1 | x) 
	= \frac{\exp( \Theta_j A_x ) }{1 + \exp( \Theta_j A_x) } , \quad j\in[m], x\in\Xcal. 
	\end{align*}
	
	On the other hand, for a hidden unit with sufficient statistics matrix $\tilde B = I_{m+1}$ we have 
	\begin{align*}
	\sum_{j\in\{0,1,\ldots, m\} } \tilde p_{\tilde\theta}(j|x) (A_x \otimes \tilde B_j) 
	=& A_x \otimes \Big( \tilde p_{\tilde \theta}(0|x) ; \sum_{j\in[m]}  \tilde p_{\tilde \theta}(j|x) e_j \Big) . 
	\end{align*}
	In this case the conditional distributions are given by 
	\begin{align*}
	\tilde p_{\tilde\theta} ( j | x) 
	= \frac{\exp( \tilde \Theta_j A_x) }{ \sum_{j} \exp( \tilde \Theta_j A_x ) } ,\quad j\in\{0,1,\ldots, m\} , x\in\Xcal. 
	\end{align*}
	
	Without loss of generality let the first row of $A$ be $\mathds{1}$. 
	Consider the parameters $\Theta_{j,i } = \tilde \Theta_{j,i} - \tilde \Theta_{0,i}$, $i=2,\ldots, a$ and $\Theta_{j,1}  = \tilde \Theta_{j,1} - \tilde \Theta_{0,1}  - \gamma$. 
	For any $\epsilon>0$ we can choose $\gamma$ large enough such that 
	\begin{equation*}
	\sum_{j=1}^m \left| \exp(\gamma)  p_\theta(y_j =1 | x) 
	-  \frac{\tilde p_{\tilde \theta}(j | x)}{ \tilde p_{\tilde \theta}(0| x)} \right| \leq \epsilon, \quad\text{for all } x\in\Xcal. 
	\end{equation*}
	This implies that 
	\begin{align*}
	\dim(\Mcal_{A\otimes B}) + 1
	= &
	\max_\theta \rank\left(A_x \otimes \Big(  1; \sum_j  p_\theta(y_j=1 |x) {e_j} \Big)  \right)_x\\
	\geq&
	\max_{\tilde \theta} \rank\left(A_x \otimes \Big(1 ; \exp(-\gamma) \sum_{j=1}^{m}  \frac{\tilde p_{\tilde \theta}(j |x)}{\tilde p_{\tilde \theta}(0|x)} {e_j} \Big) 
	\right)_x \\
	=&
	\max_{\tilde \theta} \rank\left(A_x \otimes \Big(\tilde p_{\tilde \theta}(0|x) ; \sum_{j=1}^{m} \tilde p_{\tilde \theta}(j |x) {e_j} \Big) 
	\right)_x \\
	= & \dim(\Mcal_{A\otimes I_{m+1}}) +1. 
	\end{align*}
	This completes the proof. 
	See Example~\ref{example:lemma} in the Appendix for a more explicit formulation of this proof in the case $m=2$. 
\end{proof}

The dimension bound from Lemma~\ref{theorem:mixthad} is not always tight. 
For example, the $3$-mixture of $4$ independent binary variables is defective and has dimension $13$, 
whereas the RBM with $4$ visible and $2$ hidden binary units has the expected dimension $14$.

\begin{corollary}
	\label{corollary:dimrbm}
	Let $n$ and $m$ be non-negative integers. 
	The restricted Boltzmann machine with $n$ visible and $m$ hidden binary units has dimension $\min\{2^n-1, (n+1)(m+1)-1\}$. 
\end{corollary}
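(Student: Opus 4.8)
The strategy is to trap $\dim(\Mcal_{A\otimes B})$ between matching bounds, where $A$ has row span $V_1(\{0,1\}^n)$ and $B$ has row span $V_1(\{0,1\}^m)$, so that $\rank(A)=n+1$ and $\rank(B)=m+1$. The upper bound is immediate from Proposition~\ref{proposition:rankJacobian}: $\dim(\Mcal_{A\otimes B})\leq\dim(\Ecal_{A\otimes B})=\rank(A)\rank(B)-1=(n+1)(m+1)-1$, while trivially $\dim(\Mcal_{A\otimes B})\leq|\Xcal|-1=2^n-1$. The whole content is the matching lower bound $\dim(\Mcal_{A\otimes B})\geq\min\{2^n-1,(n+1)(m+1)-1\}$.

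Two regimes are already settled by the tropical machinery. When $\{0,1\}^n$ contains $m+1$ disjoint radius-one Hamming balls, i.e.\ $m+1\leq\Amd(n,3)$, the packing item of Corollary~\ref{corollary:TRBM} (a special case of Theorem~\ref{theorem:generalKronecker}) gives $\dim(\TMcal_{A\otimes B})=(n+1)(m+1)-1$; when $\{0,1\}^n$ can be covered by $m+1$ radius-one balls, i.e.\ $m+1\geq\Kmd(n,1)$, the covering item gives $\dim(\TMcal_{A\otimes B})=2^n-1$. Since $\dim(\Mcal_{A\otimes B})\geq\dim(\TMcal_{A\otimes B})$, the lower bound holds throughout both regimes, and only the gap $\Amd(n,3)<m+1<\Kmd(n,1)$ remains.

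To close the gap I would pass from the tropical model to the actual model and compare it with a mixture. Lemma~\ref{theorem:mixthad} together with Proposition~\ref{proposition:base} yields $\dim(\Mcal_{A\otimes B})\geq\dim(\Mcal_{A\otimes\tilde B})$, where $\tilde B$ has row span $\R^{m+1}$ and $\Mcal_{A\otimes\tilde B}$ is the $(m+1)$-mixture of the binary independence model $\Ecal_A$. Whenever this mixture is non-defective its dimension equals $\min\{(n+1)(m+1)-1,2^n-1\}$, matching the target. By \cite{Catalisano2011} the mixture is non-defective for every $m$ once $n\geq5$, which resolves all gap cases with $n\geq5$; and for $n\leq3$ the gap $\Amd(n,3)<m+1<\Kmd(n,1)$ is empty, so nothing remains there.

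This leaves precisely one pair, $n=4$ and $m=2$ (the only gap point for $n=4$, since $\Amd(4,3)=2$ and $\Kmd(4,1)=4$), which is the crux. It is the unique case where both tools fail: $m+1=3$ sits strictly inside the gap so the tropical estimate need not reach the expected value, and the $3$-mixture of four binary variables is the one defective secant of $(\PP^1)^{\times4}$ (dimension $13$, not $14$), so Lemma~\ref{theorem:mixthad} only yields $13$. I would finish by a direct computation on the smooth model rather than its tropicalization: exhibit a rational parameter $\theta$ at which the Jacobian of Equation~\ref{eq:rankJ}, the $16\times16$ matrix $\big(\sum_y p_\theta(y\mid x)(A_x\otimes B_y)\big)_x$, has rank $15$, certified by a nonvanishing $15\times15$ minor; lower semicontinuity of the rank then forces the generic rank to equal $15$, so that $\dim(\Mcal_{A\otimes B})=14$. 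Supplying this explicit certificate in the single defective-mixture case is the main obstacle, precisely because it is reachable neither by the mixture comparison nor by the piecewise-linear tropical bounds used everywhere else.
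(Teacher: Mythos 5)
Your route is essentially the paper's: the lower bound comes from Lemma~\ref{theorem:mixthad} comparing the RBM $\Mcal_{A\otimes B}$ with the $(m+1)$-mixture $\Mcal_{A\otimes\tilde B}$ (identified as a mixture via Proposition~\ref{proposition:base}), the non-defectivity of that mixture from~\cite{Catalisano2011}, and the recognition that $(n,m)=(4,2)$ is the single exceptional pair. Your extra detour through the tropical regimes $m+1\leq\Amd(n,3)$ and $m+1\geq\Kmd(n,1)$ is logically sound but redundant in the paper's version, which applies the mixture comparison uniformly to all $(n,m)\neq(4,2)$; your gap analysis ($\Amd(n,3)<m+1<\Kmd(n,1)$ empty for $n\leq 3$, and equal to the single point $m+1=3$ for $n=4$) is correct.

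The genuine gap is that you never prove the case $(n,m)=(4,2)$. You correctly isolate it as the crux --- the $3$-mixture of four binary variables is the one defective secant, so Lemma~\ref{theorem:mixthad} only yields $13$ --- and you correctly describe what a proof would look like: a rational $\theta$ at which the matrix $\bigl(\sum_y p_\theta(y|x)\, A_x\otimes B_y\bigr)_x$ of Equation~\ref{eq:rankJ} has rank $15$, certified by a nonvanishing $15\times 15$ minor (note this matrix is $15\times 16$, with $(n+1)(m+1)=15$ rows and $2^n=16$ columns, not $16\times 16$ as you write). But you stop at describing the certificate rather than producing it, and you acknowledge this yourself as ``the main obstacle.'' As written, the statement is therefore unproven at exactly the point where all the general machinery fails. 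The paper closes this case not by computation but by citing~\cite{drton2009lectures,Cueto:2010:ICB:1866469.1866627}, where the expected dimension $14$ of the $(4,2)$ RBM is established; to make your argument complete you must either invoke such a reference or actually exhibit the parameter value and the nonvanishing minor.
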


\begin{proof}
	Let $A$, $B$, $\tilde B$ be matrices with row span $V_1(\{0,1\}^n)$, $V_1(\{0,1\}^m)$,  $\R^{m+1}$, respectively. 
	Then $\Mcal_{A\otimes B}$ is the restricted Boltzmann machine with $n$ visible and $m$ hidden binary variables and 
	$\Mcal_{A\otimes \tilde B}$ is the $(m+1)$-mixture of the independence model of $n$ binary variables. 
	
	The work~\cite{Catalisano2011} shows that 
	$\dim(\Mcal_{A\otimes \tilde B}) = \min\{ 2^n -1,  (m+1) (n+1) -1 \}$ unless $(n,m)=(4,2)$. 
	Lemma~\ref{theorem:mixthad} then implies $\dim(\Mcal_{A\otimes B}) \geq \dim(\Mcal_{A\otimes \tilde B}) = \min\{ 2^n -1,  (m+1) (n+1) -1 \}$ whenever $(n,m)\neq(4,2)$. 
	Since this is also the maximum possible dimension, the bound is tight. 
	That the RBM with $(n,m)=(4,2)$ has the expected dimension is shown in~\cite{drton2009lectures,Cueto:2010:ICB:1866469.1866627}. 
\end{proof}

An RBM with $5$ visible units is the first case with a gap between the largest RBM previously known to have dimension equal to the number of parameters ($4$ hidden units and $29$ parameters) and the smallest RBM previously known to be full dimensional ($7$ hidden units and $47$ parameters suffice to define a $31$-dimensional subset of the $31$-simplex) as described by~\cite{Cueto2010}, which lists a collection of such gaps in Table 4.1 for $5\leq n\leq 512$. Corollary \ref{corollary:dimrbm} closes all such gaps.  Thus the smallest RBM with $5$ visible units which could possibly define a full-dimensional subset of the simplex, with $5$ hidden units and $35$ parameters, does so. These exampes can be tested by computing Jacobians with the Matlab code provided at~\texttt{http://personal-homepages.mis.mpg.de/montufar}.

Corollary~\ref{corollary:dimrbm} proves that a binary RBM always has the expected dimension. 
The conjecture posed in~\cite{Cueto2010} goes further and states that the tropical binary RBM always has the expected dimension. That question remains unsettled at this point.

\section{Conclusion}
\label{section:conclusion}

In this work we study the dimension of marginals of exponential families whose sufficient statistics matrix factorizes as the Kronecker product of a visible and a hidden sufficient statistics matrix, called Kronecker product models. 
The Jacobian of these models factorizes as a Khatri-Rao product of the visible sufficient statistics matrix and the expectation parameters of the hidden exponential family given the visible variables. 
The tropical morphism arises as the limit of the Jacobian when the natural parameters of the model are scaled by an infinitely large number. 
It is described by the inference functions of the hidden variables given the visible variables, which correspond to slicings of the visible sufficient statistics matrix by the normal fan of the hidden sufficient statistics matrix. 

Based on these geometric and combinatorial descriptions, we computed the tropical dimension of mixtures of interaction models and Hadamard products of mixtures of interaction models. 
These results extend previous work on secant dimensions, which are most often focused on Segre and Veronese varieties (corresponding to independence models and multinomial models). 
These results also generalize previous work on Hadamard products, which were focused on products of mixtures of independence models. 
Theorem~\ref{theorem:generalKronecker} generalizes this further to the case of Kronecker products of arbitrary hierarchical models and $k$-interaction models. 
Additionally, we showed that binary restricted Boltzmann machines always have the expected dimension, thus completing the dimension description of these models from~\cite{Cueto2010}. 

Our analysis leaves many questions unanswered. 
In this work we have focused on the case where the visible exponential family is a $k$-interaction model. 
The generalization to arbitrary visible hierarchical models is left for future work. 
Furthermore, similarly to~\cite{Draisma}, the tropical approach leads in many cases to combinatorial conditions that can be very difficult to verify outside of well established cardinality bounds for error correcting codes. 
We think that a promising direction is the formulation of dimension bounds in terms of simpler models, 
as done in Lemma~\ref{theorem:mixthad} for the case of a binary independence hidden model. 
Extending that result one could ask: when is the dimension of $\Mcal_{A\otimes B}$ bounded below by the dimension of $\Mcal_{A\otimes I}$, where $I$ is an identity matrix with the same rank as $B$? 

The factorization property of the Jacobian of Kronecker product models suggests to study the models of conditional probability distributions of the hidden variables given the visible variables in more detail. 
This is a manifold of tuples of exponential family distributions with natural parameters given by the linear projection of the sufficient statistics of the visible model. 
An analysis of the Kruskal ranks for these sets can be used to obtain bounds on the Kruskal rank of the Jacobian. 

Another interesting line of investigation is the classification of the support sets of distributions in the closure of Kronecker product models. 
For mixture models the problem is simple, when the support sets of the visible exponential family are known. 
For Hadamard products the problem has been studied in~\cite{montufar2012does} based on linear threshold codes, which are the images of inference functions. 
We think that studying the combinatorics of Kronecker product polytopes could yield helpful insights. 
Given two polytopes $P_A = \conv\{A_x\colon x\in\Xcal \}$ and $P_B=\conv\{ B_y\colon y\in\Ycal \}$, we define the Kronecker product polytope as $P_{A\otimes B} = \conv \{ A_x\otimes B_y\colon (x,y)\in\Xcal\times \Ycal \}$. 
Although this appears as a rather natural composition of polytopes, 
we are not aware of works studying such objects explicitly or in a principled way.

\appendix
\section*{Appendix}

\section{Examples}

\begin{example}\label{example:lemma}
	Here we give a more comprehensive version of the proof of Lemma~\ref{theorem:mixthad} for the special case where $m=2$. 
	Consider the matrices 
	\newcolumntype{T}{>{\scriptsize$}c<{$}}
	\newcommand{\bm}{\!\!}
	\newcommand{\enm}{\!\!}
	\begin{equation*}
	B = 
	\overset{
		\renewcommand{\arraystretch}{0.5}
		\renewcommand{\tabcolsep}{1.5pt}
		\begin{tabular}{T T T T }
		00 & 01 & 10 & 11
		\end{tabular}
	}{
	\renewcommand{\arraystretch}{1}
	\renewcommand{\arraycolsep}{3pt}
	\left(\begin{array}{c c c c}
	1 & 1 & 1 & 1\\ 
	0 & 0 & 1 & 1\\ 
	0 & 1 & 0 & 1\\ 
	\end{array}\right)
}
\quad
\text{and}
\quad
\tilde B = 
\overset{
	\renewcommand{\arraystretch}{0.5}
	\renewcommand{\tabcolsep}{5pt}
	\begin{tabular}{T T T}
	0 & 1 & 2
	\end{tabular}
}{
\renewcommand{\arraystretch}{1}
\left(\begin{array}{c c c}
1 & 0 & 0\\ 
0 & 1 & 0\\ 
0 & 0 & 1\\ 
\end{array}\right)
}.
\end{equation*}

We have 
\begin{align*}
\sum_{y\in\{0,1\}^2} p_\theta(y|x) (A_x \otimes B_y)
=& A_x\otimes \sum_{y\in\{0,1\}^m} p_\theta(y|x) B_y  \\
=& A_x \otimes 
\begin{pmatrix} 
1\\ 
p_\theta(10|x) + p_\theta(11|x) \\
p_\theta(01|x) + p_\theta(11|x) 
\end{pmatrix}\\
=& A_x \otimes 
\begin{pmatrix} 
1\\ 
p_\theta(y_1=1|x)\\
p_\theta(y_2=1|x) 
\end{pmatrix}. 
\end{align*}
By Equation~\ref{equation:conditionaldistributions} the conditional distributions are given by 
\begin{equation*}
p_\theta(y | x) = \frac{1}{Z(\Theta A_x)} \exp(\langle \Theta A_x, B_y \rangle)
\end{equation*}	
such that
\begin{align*}
p_\theta(y_1 =1| x) 
= & \frac{\exp( \langle\Theta A_x, \left(\begin{smallmatrix}
	1 \\ 1
	\end{smallmatrix}\right)\rangle)}{\exp( \langle\Theta A_x, \left(\begin{smallmatrix}
	1 \\ 0
	\end{smallmatrix}\right\rangle) + \exp( \langle\Theta A_x, \left(\begin{smallmatrix}
	1 \\ 1
	\end{smallmatrix}\right)\rangle)} \\
= & \frac{\exp(\Theta_1 A_x)}{1 + \exp(\Theta_1 A_x )} , 
\end{align*}
and similarly for $p_\theta(y_2=1|x)$. 

On the other hand, for a hidden unit with sufficient statistics matrix $\tilde B$ we have 
\begin{equation*}
\sum_{j\in\{0,1,2\}} \tilde p_{\tilde\theta}(A_x\otimes \tilde B_j) 
= A_x \otimes \begin{pmatrix}
\tilde p_{\tilde \theta}(0|x)\\
\tilde p_{\tilde \theta}(1|x)\\
\tilde p_{\tilde \theta}(2|x)
\end{pmatrix}. 
\end{equation*}
In this case the conditional distributions are given by 
\begin{equation*}
\tilde p_{\tilde \theta}(1 | x) = \frac{\exp(\tilde \Theta_1 A_x)}{\exp(\tilde \Theta_0 A_x) + \exp(\tilde \Theta_1 A_x) + \exp(\tilde \Theta_2 A_x) },
\end{equation*}
and similarly for $j=0$ and $j=2$. 

For any given 
\begin{equation*}
\renewcommand{\arraycolsep}{4pt}
\tilde \Theta
=
\left(\begin{array}{cccc}
\tilde \Theta_{0,1} & \tilde \Theta_{0,2} & \cdots & \tilde \Theta_{0,a}\\
\tilde \Theta_{1,1} & \tilde \Theta_{1,2} & \cdots & \tilde \Theta_{1,a}\\
\tilde \Theta_{2,1} & \tilde \Theta_{2,2} & \cdots & \tilde \Theta_{2,a}
\end{array}\right)
\end{equation*}
we can define 
\begin{equation*}
\renewcommand{\arraycolsep}{6pt}
\Theta
=
\left(\begin{array}{cccc}
\tilde \Theta_{0,1} -\tilde \Theta_{0,1} -\gamma & \tilde \Theta_{0,2} -\tilde \Theta_{0,2} &  \cdots & \tilde \Theta_{0,a} - \tilde \Theta_{0,a}\\
\tilde \Theta_{1,1} -\tilde \Theta_{0,1} -\gamma & \tilde \Theta_{1,2} -\tilde \Theta_{0,2} &  \cdots & \tilde \Theta_{1,a} - \tilde \Theta_{0,a}\\
\tilde \Theta_{2,1} -\tilde \Theta_{0,1} -\gamma & \tilde \Theta_{2,2} -\tilde \Theta_{0,2} &  \cdots & \tilde \Theta_{2,a} - \tilde \Theta_{0,a}\\
\end{array}\right). 
\end{equation*}
Without loss of generality assume that the first row of $A$ is a row of ones.  
We have then 
\begin{equation*}
\begin{pmatrix}
1\\
p_\theta(y_1=1|x)\\
p_\theta(y_2=1|x)\\
\end{pmatrix}
=  
\begin{pmatrix}
1\\
\frac{\exp(\Theta_1 A_x)}{1 + \exp(\Theta_1 A_x)}\\
\frac{\exp(\Theta_2 A_x)}{1 + \exp(\Theta_2 A_x)}
\end{pmatrix}
= 
\begin{pmatrix}
1\\
\frac{\exp( -\gamma + (\tilde \Theta_{1} - \tilde \Theta_{0}) A_x)}
{1 + \exp( -\gamma + (\tilde \Theta_{1} - \tilde \Theta_{0}) A_x)}\\
\frac{\exp( -\gamma + (\tilde \Theta_{2} - \tilde \Theta_{0}) A_x)}
{1 + \exp( -\gamma + (\tilde \Theta_{2} - \tilde \Theta_{0}) A_x)}\\
\end{pmatrix}. 
\end{equation*}
For $\gamma$ large enough we obtain an arbitrarily accurate approximation of the form  
\begin{equation*}
\begin{pmatrix}
1\\
p_\theta(y_1=1|x)\\
p_\theta(y_2=1|x)\\
\end{pmatrix}
\approx  
\begin{pmatrix}
1\\
\exp(-\gamma)\exp( (\tilde\Theta_1 -\tilde\Theta_0) A_x  )\\
\exp(-\gamma)\exp( (\tilde\Theta_2 -\tilde\Theta_0) A_x  )
\end{pmatrix} 
= 
\begin{pmatrix}
1\\
\exp(-\gamma)\frac{\tilde p_{\tilde \theta}(1|x)}{\tilde p_{\tilde \theta}(0|x)}\\
\exp(-\gamma)\frac{\tilde p_{\tilde \theta}(2|x)}{\tilde p_{\tilde \theta}(0|x)}
\end{pmatrix}. 
\end{equation*}

Hence for any $\tilde \theta$ there is a $\theta$ with 
\begin{align*}
\rank\left( 
A_x \otimes \begin{pmatrix}
\tilde p_{\tilde \theta}(0|x)\\
\tilde p_{\tilde \theta}(1|x)\\
\tilde p_{\tilde \theta}(2|x)\\
\end{pmatrix}\right)_x
= &
\rank
\left(
L 
\cdot 
\left(A_x \otimes  
\begin{pmatrix}
1 \\ 
\exp(-\gamma) \frac{\tilde p_{\tilde \theta}(1|x) }{\tilde p_{\tilde \theta}(0|x)}\\
\exp(-\gamma) \frac{\tilde p_{\tilde \theta}(2|x) }{\tilde p_{\tilde \theta}(0|x)}
\end{pmatrix} \right)_x \cdot R\right)\\
= & 
\rank\left(A_x \otimes \begin{pmatrix}
1 \\ 
\exp(-\gamma) \frac{\tilde p_{\tilde \theta}(1|x) }{\tilde p_{\tilde \theta}(0|x)}\\
\exp(-\gamma) \frac{\tilde p_{\tilde \theta}(2|x) }{\tilde p_{\tilde \theta}(0|x)}
\end{pmatrix} \right)_x  \\
\leq & 
\rank\left(\left(A_x \otimes \begin{pmatrix}
1 \\ 
\exp(-\gamma) \frac{\tilde p_{\tilde \theta}(1|x) }{\tilde p_{\tilde \theta}(0|x)}\\
\exp(-\gamma) \frac{\tilde p_{\tilde \theta}(2|x) }{\tilde p_{\tilde \theta}(0|x)}
\end{pmatrix} \right)_x  + \epsilon \right)\\
= & 
\rank\left(
A_x \otimes \begin{pmatrix}
1 \\ 
p_\theta(y_1 =1 |x)\\
p_\theta(y_2 =1 |x)
\end{pmatrix}
\right)_x , 
\end{align*}
where 
\begin{equation*}
L = 
\left(\begin{smallmatrix}
1& & \\
&\ddots& \\
& & 1
\end{smallmatrix}\right)\otimes 
\left(\begin{smallmatrix}
1 &  & \\
& \exp(\gamma) & \\
&  & \exp(\gamma)
\end{smallmatrix}\right)
\quad\text{and}\quad
R= \left(\begin{smallmatrix}
\tilde p_{\tilde \theta}(0|1) & & \\
& \ddots & \\
& & \tilde p_{\tilde \theta}(0|a)
\end{smallmatrix}\right). 
\end{equation*}

In turn 
\begin{equation*}
\max_{\tilde \theta}
\rank\left( 
A_x \otimes \begin{pmatrix}
\tilde p_{\tilde \theta}(0|x)\\
\tilde p_{\tilde \theta}(1|x)\\
\tilde p_{\tilde \theta}(2|x)\\
\end{pmatrix}\right)_x
\leq 
\max_\theta 
\rank\left(
A_x \otimes \begin{pmatrix}
1 \\ 
p_\theta(y_1 =1 |x)\\
p_\theta(y_2 =1 |x)
\end{pmatrix}
\right)_x 
\end{equation*}
and
\begin{equation*}
\dim(\Mcal_{A\otimes \tilde B}) \leq \dim(\Mcal_{A\otimes B}). 
\end{equation*}
\end{example}

\section{Simple Bounds for Error Correcting Codes}

Some of our results are formulated in terms of the maximal number of disjoint $\Lambda$-balls that can be fit in some $\Xcal=\Xcal_1\times\cdots\times\Xcal_n$. 
Let $\Amd(\Xcal, d)$ denote the maximal cardinality of a subset of $\Xcal$ of minimum Hamming distance at least $d$. 
If $\Xcal=\{0,1,\ldots, q-1\}^n$, we write $\Amd_q(n,d)$. 
Closed forms for these functions are known only in special cases. 
We recall the Gilbert-Varshamov bound~\cite{Gilbert:1952,Varshamov:1957}: 
$$\Amd_q(n,d)\geq \frac{q^n}{\sum_{j=0}^{d-1}{n\choose j} (q-1)^j}.$$ 
If $q$ is a prime power, then 
$$\Amd_q(n,d)\geq q^{n-1-\lfloor\log_q( \sum_{j=0}^{d-2}{n-1\choose j}(q-1)^j) \rfloor}.$$ 
A simple upper bound is the sphere packing bound, 
$\Amd_q(n, d) \leq q^n / K_q(t)$, $t=\left\lfloor\frac{d-1}{2}\right\rfloor$.

Theorem~\ref{theorem:indhidintvis} is formulated in terms of the maximal number of disjoint Hamming balls that fit in a larger Hamming ball. 
An estimate for this number can be given as follows. 
\begin{proposition}
	Let $\Xcal=\Xcal_1\times\cdots\times\Xcal_n$, and let $k\leq l\leq n$. 
	Then it is possible to fit $|K(l-k)|/|K(2k)|$ disjoint radius-$k$ Hamming balls in a radius-$l$ Hamming ball. 
\end{proposition}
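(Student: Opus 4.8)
The plan is to translate this packing question into a standard coding-theoretic greedy argument. Fix the large ball to be centered at the origin, so the region to fill is $K(0,\Lambda_l)$, and reduce everything to a good choice of centers. I would first record two elementary facts about the Hamming metric. By the triangle inequality, the radius-$k$ ball $K(c,\Lambda_k)$ is contained in $K(0,\Lambda_l)$ if and only if $d_H(0,c)\le l-k$: the farthest point from $0$ within distance $k$ of $c$ sits at distance exactly $d_H(0,c)+k$. Likewise, two radius-$k$ balls $K(c,\Lambda_k)$ and $K(c',\Lambda_k)$ are disjoint if and only if $d_H(c,c')\ge 2k+1$, since any common point would force $d_H(c,c')\le 2k$, and conversely $d_H(c,c')\le 2k$ lets one exhibit a common point by flipping coordinates. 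Thus the task becomes: select as many points as possible inside the radius-$(l-k)$ ball $B:=K(0,\Lambda_{l-k})$ that are pairwise at Hamming distance at least $2k+1$.

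Next I would run a Gilbert--Varshamov style greedy selection inside $B$. Starting with pool equal to $B$, repeatedly pick an arbitrary remaining point $c$ as a center and delete from the pool every point at Hamming distance at most $2k$ from $c$. Since a radius-$2k$ Hamming ball has exactly $|K(2k)|$ points regardless of its center, and clipping to $B$ can only shrink this set, each step removes at most $|K(2k)|$ points from the pool. Hence the procedure cannot terminate before the pool is exhausted, and it selects at least $|B|/|K(2k)|=|K(l-k)|/|K(2k)|$ centers. By construction these centers lie in $B$, so each radius-$k$ ball is contained in $K(0,\Lambda_l)$, and they are pairwise at distance at least $2k+1$, so the balls are disjoint. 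This yields the claimed family and proves the proposition.

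The genuinely substantive steps are the two ``if and only if'' characterizations of containment and disjointness; everything after them is routine greedy counting. The main obstacle I expect is the disjointness direction: one must check that $d_H(c,c')\le 2k$ actually \emph{forces} a point lying simultaneously within distance $k$ of both $c$ and $c'$, which in the Hamming cube requires constructing an explicit intermediate point by flipping a suitable subset of the coordinates in which $c$ and $c'$ differ. The only other point to verify carefully is that the cardinality $|K(2k)|$ of the forbidden radius-$2k$ ball is center-independent (translation invariance of Hamming balls) and is an upper bound for the number of pool points deleted at each step even though the pool is a proper subset of the whole cube.
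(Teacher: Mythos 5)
Your greedy Gilbert--Varshamov selection inside the radius-$(l-k)$ ball is essentially the paper's own argument: the paper takes a \emph{maximal} packing and observes that, by maximality, the radius-$2k$ balls around its centers must cover $K(0,l-k)$, which is exactly the same volume count $|K(l-k)|/|K(2k)|$ that your step-by-step deletion bookkeeping performs. One caveat that does not affect correctness: only one direction of each of your ``if and only if'' claims is actually used (the triangle-inequality directions), and the converse of the containment claim can fail near the boundary of the cube (when $d_H(0,c)+k>n$ the farthest point sits at distance $n$, not $d_H(0,c)+k$), so those should be stated as one-way implications.
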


\begin{proof}
	Denote by $C$ the set of centers of a largest possible collection of disjoint radius-$k$ Hamming balls contained in a radius-$l$ Hamming ball $K(0,l)$. 
	Consider the radius-$(l+k+1)$ sphere $S$. 
	Let $d=2k+1$. 
	For every $x\in K(0,l+k+1)$ there is a $c_x\in C\cup S$ such that $d_H(x,c_x)\leq d-1$. 
	This implies that $K(0, l+k+1)\subseteq\cup_{c\in C} K(c, d-1) \cup(K(0, l+k+1)\setminus K(0, l-k))$ and $K(0, l-k)\subseteq\cup_{c\in C}K(c, d-1)$. 
	Therefore, $|C|\geq K(l-k) / K(2k)$. 
\end{proof}

\paragraph{Acknowledgments}
	This work was supported in part by DARPA grant FA8650-11-1-7145. 
	Parts of this work were carried out while G.M.~was at the Pennsylvania State University. 

\bibliographystyle{abbrv} 
\bibliography{referenzen}

\end{document}